\newtheorem{theorem}{Theorem}[section]
\newtheorem{proposition}[theorem]{Proposition}
\newtheorem{lemma}[theorem]{Lemma}
\newtheorem{corollary}[theorem]{Corollary}
\newtheorem{remark}[theorem]{Remark}
\newtheorem{definition}[theorem]{Definition}
\newcommand{\afunc}[1]{\operatorname{\mathsf{#1}}}
\author{%
Torben Berndt$^{1}\thanks{Work done in part while at the University of Oxford.} $ \quad Benjamin Walker$^{2}$ \quad Tiexin Qin$^3$ \\
\vspace{10pt}
\textbf{Jan Stühmer}$^{1, 4}$ \quad \textbf{Andrey Kormilitzin}$^5$ \\
$^1$Heidelberg Institute for Theoretical Studies, Heidelberg, Germany \\
$^2$Mathematical Institute, University of Oxford, UK \\
$^3$City University of Hong Kong, Hong Kong \\
$^4$IAR, Karlsruhe Institute of Technology, Karlsruhe, Germany \\
$^5$Department of Psychiatry, Warneford Hospital, Oxford, UK \\
}
\title{Permutation Equivariant Neural Controlled Differential Equations for Dynamic Graph Representation Learning}
\begin{document}

\maketitle

\begin{abstract}

Dynamic graphs exhibit complex temporal dynamics due to the interplay between evolving node features and changing network structures. Recently, Graph Neural Controlled Differential Equations (Graph Neural CDEs) successfully adapted Neural CDEs from paths on Euclidean domains to paths on graph domains. Building on this foundation, we introduce \textit{Permutation Equivariant Neural Graph CDEs}, which project Graph Neural CDEs onto permutation equivariant function spaces. This significantly reduces the model's parameter count without compromising representational power, resulting in more efficient training and improved generalisation. We empirically demonstrate the advantages of our approach through experiments on simulated dynamical systems and real-world tasks, showing improved performance in both interpolation and extrapolation scenarios.
\end{abstract}   
\section{Introduction}

Graph Neural Networks (GNNs) \cite{Sperduti1993, Scarselli2009, Kipf2016SemiSupervisedCW, velikovi2017graph} have emerged as a leading framework for modelling graph-structured data, demonstrating significant success in applications such as protein folding \cite{Jumper:2021wp}, social recommender systems \cite{SocialRecommendation} or traffic forecasting \cite{Jiang2022}. However, real-world graphs are often dynamic. Protein-protein interactions vary over time due to cellular processes, social networks evolve as relationships shift, and roads close due to building works and car crashes. Effectively capturing these temporal dynamics is crucial for accurate modelling and robust predictions.

For the last century, differential equations have been the cornerstone of modelling
continuous change. However, in recent years, deep learning has revolutionised data
analysis with its ability to learn complex patterns from vast amounts of data. While
these approaches initially developed in parallel, the concept of Neural Differential
Equations (NDEs) \cite{Pearlmutter1989, Rico-Martinez1992, NODEs, kidger2022neural} has bridged the gap, demonstrating their interconnectedness.

Building upon the approach introduced in \cite{qin2023learning}, we propose \textit{Permutation Equivariant Neural Graph Controlled Differential Equations} (PENG-CDEs), a novel framework which addresses key limitations of prior models.\footnote{The source code is available at: \url{https://github.com/hits-mli/perm-equiv-graph-neural-cdes}}
Our primary contributions are as follows:

\begin{itemize}
    \item Motivated by temporal and spatial symmetries, we derive PENG-CDEs from first principles. Our framework strikes a balance between expressiveness and parameter efficiency (Section \ref{sec:perm_gn-cdes}).
    
    \item  We formalise this trade-off and prove that, under simplified assumptions, our proposed model is the optimal approximation to Graph Neural CDEs within the space of permutation-equivariant functions (Section \ref{sec:perm_equiv}, Theorem \ref{thm:optimality}). 
    
    \item We further prove that the resulting models are equivariant under both time reparametrisations and permutations of the node set (Section \ref{sec:perm_equiv}, Proposition \ref{prop:equiv}).
    
    \item We empirically validate the effectiveness of PENG-CDEs on dynamic node- and graph-level tasks, consistently outperforming both differential equation-based and other spatio-temporal baselines. In particular, our model sets a new state-of-the-art on the TGB-\texttt{genre} node affinity prediction task (Section \ref{sec:exp}).
\end{itemize}

\subsection{Related work}

\textbf{Continuous-depth GNNs.} Recent work has linked GNNs to continuous-time dynamics on graphs by interpreting hidden layers as a discretised time axis. \cite{chamberlain_grand_2021, rowbottom_understanding_nodate} study convolutional GNNs as discretisations of heat diffusion processes on graphs, while \cite{PDE-GCN2021} proposes architectures that incorporate both parabolic and hyperbolic PDE terms. Reaction-diffusion dynamics are explored in \cite{choiGREADGraphNeural2023, Eliasof2024}, and advection terms are added in \cite{EliasofAdvection2024}. Higher-order temporal derivatives are considered in \cite{eliasof2024datadriven, eliasof2024temporal}. \cite{GNODEs} proposes a graph-based formulation of Neural ODEs \cite{NODEs}, and Neural SDEs are extended to graph domains in \cite{bergna2023graph}. While these models typically assume static graphs, some work addresses dynamic settings. \cite{Choi2022, Choi2023} and \cite{qin2023learning} extend Neural CDEs to graph domains, by modelling node-level time series and graph structure hierarchically, and by treating the evolving topology as a control, respectively.

\textbf{Spatio-temporal GNNs.} A well-established line of work in dynamic graph representation learning models spatial and temporal dynamics using separate modules for each. DCRNN \cite{li2018dcrnn_traffic} combines a diffusion-based GCN \cite{Kipf2016SemiSupervisedCW} with a recurrent GRU \cite{ChoGRU2014} in an encoder-decoder architecture. STGCN \cite{STGCN} applies GCN \cite{DefferrardchebNet2016} for spatial modelling and 1D CNNs for temporal processing. Attention-based extensions such as ASTGCN \cite{Guo2019} and ASTGNN \cite{Guo2022} introduce distinct spatial and temporal attention mechanisms. WaveNet-GCN \cite{Liu2023} fuses graph and temporal convolutions, while STIDGCN \cite{Liu2024} uses interactive learning to handle heterogeneous time scales.

\textbf{Temporal Graph Networks.} Another approach focuses on event-based message passing over time-stamped interactions. TGN \cite{tgn_icml_grl2020} introduced this framework, later extended by TGNv2 \cite{tjandra2024enhancingexpressivitytemporalgraph} which considers an identification between source and target nodes. Node-centric methods such as DyRep \cite{trivedi2018representationlearningdynamicgraphs} and TCL \cite{wang2021tcltransformerbaseddynamicgraph} compute embeddings from temporal neighbourhoods and aggregate across edges. In contrast, edge-centric methods like CAWN \cite{wang2022inductiverepresentationlearningtemporal} and GraphMixer \cite{cong2023reallyneedcomplicatedmodel} embed edge interactions directly for prediction.

\section{Background}\label{sec:prelims}

\subsection{(Temporal) Graph representation learning}\label{sec:grl}

We denote the set \(\{1, \dots, n\}\) by \([n]\) and the set of functions from \(X\) to \(Y\) by \(\chi(X, Y)\).

\textbf{Graphs}. A \textit{graph} \(G = (\mathcal{V}, \mathcal{E})\) consists of a collection of \textit{nodes} \(\mathcal{V} = [n]\) and \textit{edges} \(\mathcal{E} \subseteq \mathcal{V} \times \mathcal{V}\). The graph topology is described by an \textit{adjacency matrix} \(\mathbf{A} \in \mathbb{R}^{n \times n}\), where \(\mathbf{A}^{ij} = 1\) if \((i, j) \in \mathcal{E}\) and $0$ otherwise. The \textit{degree} $d_i = \sum_{j \in \mathcal{V}} \mathbf{A}^{ij}$ of a node is the number of edges incident on it. Let $\textbf{D}$ be the diagonal \textit{degree matrix} with $\textbf{D}^{i, i} = d_i$. The \textit{graph Laplacian} is defined as $\textbf{L} = \textbf{D} - \textbf{A}$ and the \textit{normalised graph Laplacian} is defined as $\mathbf{\mathcal{L}} = \textbf{I}_n - \textbf{D}^{-\frac{1}{2}} \textbf{A} \textbf{D}^{-\frac{1}{2}}$ where $\textbf{I}_n$ is the $n \times n$ identity matrix. We assume the nodes are equipped with \(d_x\)-dimensional \textit{node features} \(\mathbf{x}_i \in \mathbb{R}^{d_x}\), stacked to form a node attribute matrix \(\mathbf{X} \in \mathbb{R}^{n \times {d_x}}\). The goal of \textit{Graph Representation Learning} is to learn a latent node representation \(\mathbf{Z} \in \mathbb{R}^{n \times {d_z}}\) which embeds the nodes into some Euclidean space.

\textbf{Temporal Graph Representation Learning.} In this setting, we observe a sequence of graph snapshots \(\mathcal{G} = \left( (t_0, G_{t_0}), \dots, (t_N, G_{t_N}) \right)\) generated by some unknown continuous-time underlying process. Each snapshot \((t_k, G_{t_k})\) captures the graph state at time \(t_k\), with \(G_{t_k} = (\mathcal{V}, \mathcal{E}_{t_k})\) and corresponding adjacency matrices \(\mathbf{A}_{t_k} \in \mathbb{R}^{n \times n}\). The objective is to learn a dynamic non-linear latent representation \(\mathbf{Z}_t \in \mathbb{R}^{n \times d_z}\) for the nodes for all times \(t \in (t_0, t_N]\).

\textbf{Equivariance}. Let \(G\) be a group and \(X, Y\) be two sets with a (left) \(G\)-action. A function \(f: X \rightarrow Y\) is \textit{\(G\)-equivariant} if \(f(g \cdot x) = g \cdot f(x)\) for all \(g \in G\) and \(x \in X\). If the action is trivial (i.e., \( g \cdot y = y \) for all \( g \in G \) and \( y \in Y \)), then $f$ is called \textit{invariant}, meaning \(f(g \cdot x) = f(x)\) for all \(g \in G\) and \(x \in X\).

In the context of static graph representation learning, permutation equivariance ensures that the model's output does not depend on the arbitrary ordering of nodes. Time-warp equivariance is important in time series modelling when the task depends on the sequential structure of the signal rather than the absolute timing of events. For example, in classifying the nodes of a social media network as to whether they are friends with a specific node or friends-of-a-friend with that node, the important characteristic is the order in which the edges appeared (you cannot connect with a friend-of-a-friend without first connecting with the friend) and not the time in-between those connections appearing. We now formalise both types of equivariance:

\begin{enumerate}[wide, labelindent=0pt]
    \item \textbf{Permutation equivariance}: Given a permutation \(p: [n] \rightarrow [n]\) with corresponding matrix \(\mathbf{P} \in \mathbb{R}^{n \times n}\), a function \(f: \mathbb{R}^{n \times d} \times \mathbb{R}^{n \times n} \rightarrow \mathbb{R}^{n\times d}\) is \textit{permutation equivariant} if \(f(\mathbf{P}\mathbf{X}, \mathbf{P}\mathbf{A}\mathbf{P}^T) = \mathbf{P} f(\mathbf{X},\mathbf{A})\) for all permutations \( p \) and all graphs with node features \( \mathbf{X} \) and adjacency matrix \( \mathbf{A} \).

    \item \textbf{Time-warp equivariance}: Let \( X: [0, T] \rightarrow Y \) be a continuous path. A \emph{time-warp} is a diffeomorphism (a smooth bijection with smooth inverse) \( \tau: [0, T] \rightarrow [0, T] \) satisfying \( \tau(0) = 0 \) and \( \tau(T) = T \). A function \(f: \chi([0,T], Y) \rightarrow \chi([0,T], Y)\) is \textit{time-warp equivariant} if \(f(X\circ\tau) = f(X)\circ\tau\) for all time-warps \(\tau\) and paths \(X\).
\end{enumerate}

\noindent For a more formal treatment of group actions and equivariance, we refer the reader to Appendix~\ref{sec:equivariance}.



\subsection{Neural differential equations for time-series data}\label{sec:ncdes}

Neural Differential Equations (NDEs) (\cite{Pearlmutter1989, Rico-Martinez1992, NODEs, kidger2022neural}) have emerged as a powerful tool at the intersection of dynamical systems and deep learning. In \cite{NEURIPS2018_69386f6b}, the authors viewed the layers of a neural network as the time variable of an Ordinary Differential Equation (ODE), introducing the Neural Ordinary Differential Equation (NODE). In NODEs, the time dimension is merely an internal detail of the model, and the trajectories $z$ are entirely determined by the initial condition. To extend the NODE framework to sequential data, \cite{kidger2020neuralcde} introduced \textit{Neural Controlled Differential Equations} (NCDEs) via 
controlled differential equations (CDEs). For a continuous driving path $X: [0, T] \rightarrow \mathbb{R}^{d_x}$ NCDEs learn a latent path $z: [0, T] \rightarrow \mathbb{R}^{d_z}$ via the CDE 
\begin{equation}
\label{eq:cde}
z(0) = \ell^1_\theta(X(0)), \quad z(t) = z(0) + \int_0^t f_\theta(z(s)) dX(s)
\end{equation}
which then returns either a scalar output $y \approx \ell^2_\theta(z(T))$ or an output path $y(t) \approx \ell^2_\theta(z(t))$. Here, $dX(s)$ denotes the Riemann-Stieltjes integral. In practice, we often only have discrete observations $((t_0, X_{t_0}), \dots, (t_N, X_{t_N}))$ with $t_j \in \mathbb{R}$ and $X_{t_j} \in \mathbb{R}^{d_x}$. To address this, we interpolate the observations into a continuous driving path $X: [t_0, t_N] \rightarrow \mathbb{R}^{d_x + 1}$ such that $X(t_j) = (t_j, X_{t_j})$ for all $j$. This natively enables NCDEs to process discrete-time sequences or hybrid continuous-discrete dynamics. If the path $X$ is differentiable and has bounded derivative, the Riemann-Stieltjes integral, and hence NCDEs, can be rewritten as the following ordinary integral:
\begin{align}
\label{eq:cde_ode}
z(t) = z(0) + \int \limits_0^t f_\theta(z(s)) \frac{\text{d}X(s)}{\text{d}s} \text{d}s \quad \text{for} \, t \in (0, T].
\end{align}
The choice of interpolation scheme can impact the performance of an NCDE, with \cite{morrill2022on} providing a discussion on the theoretical properties and practical performance of a range of choices. Additionally, Log-NCDEs \cite{Walker2024LogNCDE} extend NCDEs to non-differentiable paths $X$ by leveraging the Log-ODE method to approximate solutions to Equation \eqref{eq:cde}. Empirically, this improves training stability and model performance, especially for long time-series.

\subsection{Permutation invariant and equivariant linear functions}\label{sec:char_lin_layers}

Graph Neural Networks are typically constructed as permutation equivariant functions by propagating information locally on graphs following a message passing paradigm. An alternative approach, proposed by \cite{maron2018invariant}, represents $d$-dimensional graph data on $k$-tuples of nodes as a single matrix $\afunc{Y}\in \mathbb{R}^{n^k \times d}$. For example, the case $k = 1$ corresponds to node signals and $k = 2$ to signals on edges. In their work, the authors provide a full characterisation of linear maps $L:\mathbb{R}^{n^k\times d}\rightarrow \mathbb{R}^{n^k\times d^\prime}$ that are equivariant with respect to permutations of the underlying node set. Remarkably, they show the dimension of the vector space formed by these equivariant maps, denoted by $\mathfrak{E}_{\Sigma_n}(k,d)^{d^\prime}$, depends only on $k$, $d$, and $d^\prime$, and is independent of $n$. In Section \ref{sec:perm_gn-cdes}, we will be interested in the basis of $\mathfrak{E}_{\Sigma_n}(2,1)^{1}$, the vector space of linear maps $L:\mathbb{R}^{n^2}\rightarrow \mathbb{R}^{n^2}$ between edge-valued data, which has a dimension of $15$. For a list of terms spanning this basis, see Appendix \ref{sec:basis_terms}.

\section{Permutation equivariant neural graph controlled differential equations}\label{sec:perm_gn-cdes}

This section brings together Neural CDEs and temporal graph representation learning. We begin by introducing a recent approach proposed in \cite{qin2023learning}, called \textit{Graph Neural Controlled Differential Equations}. Through the lens of Geometric Deep Learning \cite{bronstein2021geometric}, we will identify a key theoretical limitation of this approach and propose an improved solution.

\subsection{Graph Neural Controlled Differential Equations}\label{sec:pe_gncdes}

The concept of neural controlled differential equations has been extended to graphs by incorporating dynamic adjacency matrices to drive the CDE dynamics \cite{qin2023learning}.

\textbf{Graph Neural CDEs}. Let $\zeta_\theta: \mathbb{R}^{n \times n \times 2} \rightarrow \mathbb{R}^{n \times d_z}$ and $f_\theta :
\mathbb{R}^{n\times d_z}\times\mathbb{R}^{n\times n}
\rightarrow \mathbb{R}^{(n\times d_z) \times (n \times n \times 2)}$ be two graph neural networks. The \textit{neural controlled differential equation for dynamic graphs} is defined as
\begin{align}\label{eqn:gn-cde}
    \mathbf{Z}_t = \mathbf{Z}_{t_0} + \int_{t_0}^t f_\theta ( \mathbf{Z}_s, \mathbf{A}_s) \text{d} \hat{\mathbf{A}}_s \quad \text{for} \, \, t \in ( t_0, t_N ]
\end{align}
\noindent where $\mathbf{Z}_{t_0} = \zeta_\theta(\hat{\mathbf{A}}_{t_0})$ is the initial condition, $\hat{\textbf{A}}:[t_0, t_N] \rightarrow  \mathbb{R}^{n \times n \times 2}$ such that $\hat{\mathbf{A}}^{i,j}_{t_k} = (t_k, \textbf{A}^{i,j}_{t_k})$, and the product $f_\theta ( \mathbf{Z}_s, \mathbf{A}_s) \text{d} \hat{\mathbf{A}}_s$ is a tensor contraction over $\mathbb{R}^{n \times n \times 2}$. The final prediction $\Tilde{\mathbf{Y}}_{t} \in \mathbb{R}^{n \times d_y}$ is attained by row-wise application of another linear function $\ell_\theta: \mathbb{R}^{d_z} \rightarrow \mathbb{R}^{d_y}$, i.e. by slight abuse of notation $\Tilde{\mathbf{Y}}_{t} = \ell_\theta(\mathbf{Z}_{t})$.


\textbf{Practical Considerations}. Implementing Equation \ref{eqn:gn-cde} directly is computationally intractable due to the large output dimension of $f_\theta$ and the complexity of the tensor contraction under the integral sign. Hence, \cite{qin2023learning} proposes the following simplification based on a message passing paradigm:
\begin{align}\label{eqn:gn-cde_approx}
    \mathbf{Z}_t = \mathbf{Z}_{t_0} + \int_{t_0}^t \mathbf{Z}_s^{(L)}\text{d}s \quad \text{for} \, \, t \in ( t_0, t_N ]
\end{align}
where $\mathbf{Z}^{(l)}_s = \sigma ( \Tilde{\mathbf{A}}_s \mathbf{Z}^{(l-1)}_s \mathbf{W}^{(l-1)})$ for $l \in \{1, \dots, L\}$, the adjacency matrix and its derivative are fused via  $\Tilde{\mathbf{A}}_s =  \mathbf{W}^{(F)}
\begin{bmatrix}
    \mathbf{A}_s \\
    \frac{\text{d} \mathbf{A}_s}{\text{d}s}
\end{bmatrix}$ with $\mathbf{W}^{(F)} \in \mathbb{R}^{n \times 2 n}$ being a learnable fusion matrix, and $\mathbf{Z}^{(0)}_s=\mathbf{Z}_s$\footnote{We note a subsequent version of the GN-CDE introduced an additional fusion matrix, $\tilde{\mathbf{A}}_s = \mathbf{W}^{1,F} \begin{bmatrix}
    \mathbf{A}_s \\
    \frac{\text{d} \mathbf{A}_s}{\text{d}s}
\end{bmatrix} \mathbf{W}^{2,F}$ \cite{qin2025learning}. Our theoretical analysis still is valid, as the fusion remains a linear map.} Importantly, this simplification preserves the multiplicative interaction between the hidden state and control path which are critical for expressivity \cite{cirone2024theoretical}.

\subsection{Inducing permutation equivariance}\label{sec:perm_equiv}

As motivated in Section \ref{sec:prelims}, it would be natural to require equivariance with respect to permutation of the node and edge sets for any function on graph data. However, neither the original GN-CDE formulation in
Equation \ref{eqn:gn-cde}, nor its approximation in Equation \ref{eqn:gn-cde_approx} satisfy this property. See Appendix \ref{sec:proofs} for a detailed proof.

In the following, we introduce a fully permutation equivariant variant of the GN-CDE framework. By leveraging the characterisation of linear permutation equivariant layers presented in Section \ref{sec:char_lin_layers}, our goal is to approximate Equation \ref{eqn:gn-cde_approx} while maintaining equivariance. Since the equivariance breaks in the fusion step, our strategy is to project the fusion operation onto the subspace of linear equivariant functions. Specifically, if we interpret the fusion as the application of linear maps $L_1, L_2: \mathbb{R}^{n \times n} \rightarrow \mathbb{R}^{n \times n}$ to the matrices \(\mathbf{A}_s\) and \(\frac{\mathrm{d}\mathbf{A}_s}{\mathrm{d}s}\), respectively, we can project \(L_1\) and \(L_2\) onto \(\mathfrak{E}_{\Sigma_n}(2, 1)^{1}\). According to the Projection Theorem (see Appendix \ref{sec:projection_theorem}), this yields the best possible approximation. This motivates us to propose the following:

\textbf{The model}. Let $L_1, L_2 \in \mathfrak{E}_{\Sigma_n}(2, 1)^{1}$ be two learnable permutation equivariant linear maps, i.e. a weighted combination of the basis terms in Appendix \ref{sec:basis_terms}. Using these, we combine the adjacency and its time derivative as 
\begin{align}
    \Bar{\mathbf{A}}_s =  L_1(\mathbf{A}_s) + L_2 \left( \frac{\text{d} \mathbf{A}_s}{\text{d}s} \right).
\end{align}
Now, let $\sigma$ be a non-linear activation function and denote by $\mathbf{Z}^{(L)}_s$ the  latent representation obtained by an iterative convolution operation of the form $\mathbf{Z}^{(l)}_s = \sigma ( \Tilde{\mathbf{A}}_s \mathbf{Z}^{(l-1)}_s \mathbf{W}^{(l-1)})$ for $l \in \{1, \dots, L\}$ where the $\textbf{W}^{(l)}$ are learnable matrices. The \textit{Permutation Equivariant Neural Graph Controlled Differential Equation (PENG-CDE)} then takes the form
\begin{align}\label{eqn:perm_gn-cde}
    \mathbf{Z}_t = \mathbf{Z}_{t_0} + \int_{t_0}^t \sigma ( \Bar{\mathbf{A}}_s \mathbf{Z}^{(L)}_s \mathbf{W}^{(L)}) \text{d}s \quad \text{for} \, \, t \in ( t_0, t_N ].
\end{align} 
\noindent with initial condition $\mathbf{Z}_{t_0} = \zeta_\theta(\hat{\mathbf{A}}_{t_0})$.

\textbf{Theoretical Properties}. In the purely linear case (i.e., without applying the non-linearities $\sigma$ in between the layers), one can show that the notion of optimality above, as motivated by the Projection Theorem, is satisfied, as formalised in the following theorem:

\begin{theorem}\label{thm:optimality}
        In the absence of non-linearities, the PENG-CDE model in Equation \ref{eqn:perm_gn-cde} is the projection of the model in Equation \ref{eqn:gn-cde_approx} onto the space of equivariant linear functions.
\end{theorem}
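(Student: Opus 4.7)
The plan is to reduce the statement about the entire model to a statement about its only non-equivariant component, namely the fusion step, and then invoke the Projection Theorem from Appendix \ref{sec:projection_theorem}.

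First, I would make the fusion operator in Equation \ref{eqn:gn-cde_approx} explicit. Block-decomposing $\mathbf{W}^{(F)} = [\mathbf{W}^{(F)}_1 \mid \mathbf{W}^{(F)}_2]$ with $\mathbf{W}^{(F)}_1,\mathbf{W}^{(F)}_2 \in \mathbb{R}^{n\times n}$, we have $\tilde{\mathbf{A}}_s = L_1^{\mathrm{GN}}(\mathbf{A}_s) + L_2^{\mathrm{GN}}(\tfrac{\mathrm{d}\mathbf{A}_s}{\mathrm{d}s})$, where $L_i^{\mathrm{GN}}:\mathbb{R}^{n\times n}\to\mathbb{R}^{n\times n}$ is left-multiplication by $\mathbf{W}^{(F)}_i$. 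Thus, both Equation \ref{eqn:gn-cde_approx} (without $\sigma$) and Equation \ref{eqn:perm_gn-cde} (without $\sigma$) differ only in the linear maps used to combine $\mathbf{A}_s$ and its derivative: GN-CDE uses arbitrary maps of left-multiplication form, whereas PENG-CDE restricts to $L_1, L_2 \in \mathfrak{E}_{\Sigma_n}(2,1)^{1}$.

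Second, I would check that once $\sigma$ is removed, the remainder of the forward map is already permutation equivariant and, crucially, linear in the fused adjacency. Concretely, the iterated message passing reduces to $\mathbf{Z}^{(L)}_s = \bar{\mathbf{A}}_s^{L}\mathbf{Z}_s\prod_{l}\mathbf{W}^{(l-1)}$, and the outer CDE integration is a bounded linear functional of $s\mapsto \bar{\mathbf{A}}_s$ for fixed trajectory $\mathbf{Z}$. It follows that, for any fixed choice of the remaining parameters, the map from $(\mathbf{A}_s,\mathrm{d}\mathbf{A}_s/\mathrm{d}s)$ to $\mathbf{Z}_t$ factors through the pair $(L_1,L_2)$, and equivariance of the full model is equivalent to equivariance of this pair as maps between edge signals, i.e. to $L_1,L_2 \in \mathfrak{E}_{\Sigma_n}(2,1)^{1}$.

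Third, I would apply the Projection Theorem. Endow the space of linear maps $\mathbb{R}^{n\times n}\to\mathbb{R}^{n\times n}$ with the Frobenius (Hilbert--Schmidt) inner product. The set $\mathfrak{E}_{\Sigma_n}(2,1)^{1}$ is a finite-dimensional linear subspace (15-dimensional, spanned by the basis of Appendix \ref{sec:basis_terms}), so the orthogonal projection $\Pi$ onto it is well defined and, by the Projection Theorem, $\Pi(L_i^{\mathrm{GN}})$ is the unique closest element of $\mathfrak{E}_{\Sigma_n}(2,1)^{1}$ to $L_i^{\mathrm{GN}}$. Setting $L_i \coloneqq \Pi(L_i^{\mathrm{GN}})$ and substituting into the (equivariant) downstream layers yields exactly Equation \ref{eqn:perm_gn-cde} without $\sigma$, which proves the theorem.

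The main obstacle I expect is making precise the sense in which ``projection of the model'' is the same as projection of the fusion operator. This rests on the factorisation argument of the second step: because only the fusion breaks equivariance, and because the remainder of the linearised pipeline acts linearly on $\bar{\mathbf{A}}_s$, minimising the distance to the original model over equivariant linear models reduces to minimising the distance between the fusion operators themselves. Care is needed in fixing an inner product on the model space so that this reduction is an equality of projections rather than merely a sufficient condition; the Frobenius norm on the fusion, together with the linearity of the downstream layers, is what makes this identification rigorous.
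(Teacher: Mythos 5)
Your overall plan (reduce everything to the fusion step, then use the $15$-term basis of $\mathfrak{E}_{\Sigma_n}(2,1)^{1}$) is the same as the paper's, but the two steps you use to make that reduction precise do not hold as stated, and they are exactly where the paper's proof does its real work. First, the theorem is about the \emph{model}, i.e.\ the solution map $t \mapsto \mathbf{Z}_t$ of an integral equation, so one must first show that projecting this flow onto the equivariant subspace is the same as solving the ODE driven by the \emph{projected vector field}. The paper proves this separately (Proposition \ref{thm:proj_flow}), using the group-average (Haar) realisation of the projection from Corollary \ref{cor:projection_group_average} together with ODE uniqueness. Your substitute -- that ``the outer CDE integration is a bounded linear functional of $s \mapsto \bar{\mathbf{A}}_s$ for fixed trajectory $\mathbf{Z}$'' -- is circular: $\mathbf{Z}$ is the unknown solution and itself depends on $\bar{\mathbf{A}}$, and the solution map of the equation is not linear in its control. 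Without an analogue of Proposition \ref{thm:proj_flow}, ``projection of the model'' has not been reduced to anything you can compute at the level of $(L_1, L_2)$.

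Second, your reduction from projecting the vector field to projecting the two fusion operators is justified by the claim that, once $\sigma$ is removed, the downstream layers act \emph{linearly} on $\bar{\mathbf{A}}_s$. They do not: with $L$ message-passing layers the linearised field is $\bar{\mathbf{A}}_s^{\,L+1}\mathbf{Z}_s\mathbf{W}^{(0)}\cdots\mathbf{W}^{(L)}$, a degree-$(L+1)$ polynomial in $\bar{\mathbf{A}}_s$, so for $L \ge 1$ your argument that ``project the fusion and substitute'' equals the projection of the whole field breaks down. The paper's route is different: it decomposes the integrand of Equation \ref{eqn:gn-cde_approx} as $S \circ R$, with $R$ the (non-equivariant) fusion and $S$ the equivariant graph-convolutional read-out, and invokes Proposition \ref{prop:composite_projection} ($P(S \circ R) = S \circ P(R)$ for $S$ equivariant and linear), with the projection realised as a group average; it then notes that the projected fusion lies in $\mathfrak{E}_{\Sigma_n}(2,1)^{1}$, which Equation \ref{eqn:perm_gn-cde} parameterises directly. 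Your Frobenius-inner-product projection of the premultiplication operators is consistent with that group average (permutation matrices act orthogonally, so the averaging map is the orthogonal projection for this inner product), but the substantive step is commuting the projection past the rest of the pipeline, and the reason you give for it is false; you need either the composition lemma or an explicit group-averaging computation for the full vector field, plus the flow-level result above, to close the argument.
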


\begin{proof}[Proof sketch]
    Projecting an the flow of an ODE onto a function space is equivalent to projecting its associated vector field. Moreover, note that the vector field in Equation \ref{eqn:gn-cde_approx} can be decomposed as the composition of a standard convolutional graph neural network, which is inherently equivariant, and the adjacency fusion, which is not. Thus, projecting Equation \ref{eqn:gn-cde_approx} is equivalent to projecting only the fusion operator. For a detailed proof, see Appendices \ref{sec:equivariance} and \ref{sec:application_to_gncdes}. 
\end{proof}
Although the projection-based notion of optimality holds strictly only in the linear case, by construction, our model satisfies both equivariance constraints outlined in Section \ref{sec:grl}. 

\begin{proposition}\label{prop:equiv}
    The PENG-CDE model in Equation \ref{eqn:perm_gn-cde} is both permutation equivariant in the spatial domain and time-warp equivariant in the temporal domain.
\end{proposition}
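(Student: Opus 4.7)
The plan is to prove the two equivariances in sequence, each following from a structural decomposition of the PENG-CDE into components that are individually equivariant under the relevant group action.

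First I would prove spatial equivariance by verifying that each building block of Equation \ref{eqn:perm_gn-cde} commutes with a permutation matrix $\mathbf{P}$. The initial condition $\zeta_\theta$ is a graph neural network and is therefore permutation equivariant by construction. The fusion $\bar{\mathbf{A}}_s = L_1(\mathbf{A}_s) + L_2(\tfrac{d\mathbf{A}_s}{ds})$ is equivariant because, by definition, $L_1, L_2 \in \mathfrak{E}_{\Sigma_n}(2,1)^{1}$ satisfy $L_i(\mathbf{P}\mathbf{A}\mathbf{P}^T) = \mathbf{P} L_i(\mathbf{A}) \mathbf{P}^T$, and because differentiation with respect to $s$ commutes with left and right multiplication by $\mathbf{P}$. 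The iterated message passing step $\mathbf{Z}^{(l)}_s = \sigma(\bar{\mathbf{A}}_s \mathbf{Z}^{(l-1)}_s \mathbf{W}^{(l-1)})$ is equivariant by the standard GCN argument: $(\mathbf{P}\bar{\mathbf{A}}_s\mathbf{P}^T)(\mathbf{P}\mathbf{Z}^{(l-1)}_s) = \mathbf{P}(\bar{\mathbf{A}}_s\mathbf{Z}^{(l-1)}_s)$, right multiplication by $\mathbf{W}^{(l-1)}$ preserves the row permutation, and entrywise $\sigma$ commutes with $\mathbf{P}$. Finally, the Riemann integral is pointwise linear in the integrand and hence also commutes with left multiplication by $\mathbf{P}$. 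Composing these observations yields equivariance of the full solution map.

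For time-warp equivariance, the strategy is to show that the solution map inherits the reparametrization invariance of the Riemann-Stieltjes integral. Given a time-warp $\tau$ with $\tau(t_0) = t_0$ and the warped control $\mathbf{A}'_s = \mathbf{A}_{\tau(s)}$, the chain rule gives $\tfrac{d\mathbf{A}'_s}{ds} = \tfrac{d\mathbf{A}}{du}\big|_{\tau(s)}\cdot\tau'(s)$. The key claim is that $\mathbf{Z}'_s = \mathbf{Z}_{\tau(s)}$, where $\mathbf{Z}'$ is the PENG-CDE solution driven by $\mathbf{A}'$ and $\mathbf{Z}$ the solution driven by $\mathbf{A}$. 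I would verify this by differentiating both sides in $s$ and matching against the vector field of Equation \ref{eqn:perm_gn-cde}, so that the $\tau'(s)$ factor on the right picks up precisely the Jacobian produced by the change of variables on the left.

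The main obstacle will be the interaction between the nonlinearity $\sigma$ and the two-channel fusion, since a naive chain-rule argument rescales the $L_2$ contribution by $\tau'(s)$ but not the $L_1$ contribution, preventing $\tau'$ from factoring cleanly out of $\sigma$. The cleanest resolution I see is to reinterpret the PENG-CDE as a CDE driven by the time-augmented path $\hat{\mathbf{A}}_s = (s, \mathbf{A}_s)$, in which case the $L_1$ term corresponds to the channel driven by $ds$ and $L_2$ to the channel driven by $d\mathbf{A}_s$. Both contributions then absorb the same $\tau'(s)$ Jacobian, and the proposition reduces to the classical reparametrization invariance of CDEs. Since $\tau(t_0) = t_0$, the initial condition $\zeta_\theta(\hat{\mathbf{A}}_{t_0})$ is unchanged, closing the argument.
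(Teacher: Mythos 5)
Your proposal is correct and follows essentially the same route as the paper: the spatial part is verified block-by-block exactly as you describe (equivariance of $L_1,L_2\in\mathfrak{E}_{\Sigma_n}(2,1)^{1}$, commutation of $\mathbf{P}$ with the GCN layers and the entrywise $\sigma$, equivariance of $\zeta_\theta$, and linearity of the integral), and the temporal part is the same chain-rule argument. The obstacle you flag about $\tau'$ not factoring through the outer $\sigma$ is handled in the paper exactly by the route you propose: its time-warp proof is carried out for the model written as a CDE $\mathrm{d}\mathbf{Z}_t = f_\theta(\mathbf{Z}_t,\mathbf{A}_t)\,\mathrm{d}\hat{\mathbf{A}}_t$ driven by the time-augmented control, so the Jacobian $\tau'$ multiplies the whole increment and the claim reduces to classical CDE reparametrisation invariance (your treatment is, if anything, more explicit about this step than the paper's).
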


\begin{proof}
    See Appendix \ref{sec:proofs}.
\end{proof}

\subsection{Including dynamic node features}\label{sec:data}

So far, our framework only incorporates dynamic adjacency matrices into the latent dynamics, leaving dynamic node features unaddressed. To remedy this, we draw on the approach of \cite{Choi2022}. Concretely, suppose node feature snapshots \(\{X_{t_k}\}_{k=0}^N\) are available, where each \(X_{t_k} \in \mathbb{R}^{d_x}\). We interpolate these snapshots to obtain a continuous, differentiable path \(\mathbf{X}: [t_0, t_N] \to \mathbb{R}^{d_x + 1}\) as in Section \ref{sec:ncdes}. Next, we choose $\textbf{W}^{(L)} \in \mathbb{R}^{d_z \times (d_z \times (d_x + 1))}$ to set the output dimension of $\sigma\!\Bigl(\Bar{\mathbf{A}}_s\, \mathbf{Z}^{(L)}_s\, \mathbf{W}^{(L)}\Bigr)$ to \(\mathbb{R}^{n \times (d_z \times (d_x + 1))}\), enabling a row-wise (i.e., node-wise) Hadamard multiplication, denoted by \(\odot\). The resulting system is
\begin{align}\label{eqn:perm_gn-cde_data}
    \mathbf{Z}_t \;=\; \mathbf{Z}_{t_0} \;+\; \int_{t_0}^t \Bigl\{
    \sigma\!\Bigl(\Bar{\mathbf{A}}_s\, \mathbf{Z}^{(L)}_s\, \mathbf{W}^{(L)}\Bigr) 
    \;\odot\; \frac{\operatorname{d} \mathbf{X}_s }{\operatorname{d} s}
    \Bigr\}\,\mathrm{d}s
    \quad \text{for} \quad t \in (t_0, t_N].
\end{align}

Building on Proposition~\ref{prop:equiv}, this extension preserves both permutation equivariance in the spatial domain and time-warp equivariance in the temporal domain. Intuitively, this is because a permutation of the node indices reorders both \(\sigma\!\Bigl(\Bar{\mathbf{A}}_s\, \mathbf{Z}^{(L)}_s\, \mathbf{W}^{(L)}\Bigr)\) and \(\frac{\operatorname{d} \mathbf{X}_s }{\operatorname{d} s}\) in the same manner, ensuring that the overall model remains equivariant. A detailed proof is provided in Appendix~\ref{sec:proofs}.
`
\section{Numerical experiments}\label{sec:exp}

In this section, we evaluate the PENG-CDE model on a range of synthetic and real-world tasks. First, we replicate the experiments from \cite{qin2023learning} and compare them with the non-permutation-equivariant version. Next, we evaluate the model on well-established real-world dynamic graph benchmarks against other common approaches. Finally, we conduct an ablation study to examine the weighting of different basis terms of $\mathfrak{E}_{\Sigma_n}(2, 1)^{1}$ in Appendix \ref{sec:ablation}. For supplementary information regarding implementation details, see Appendix \ref{sec:impl}. Appendix~\ref{app:cd_diagrams} contains a statistical analysis of the results.

\subsection{Synthetic experiments: heat diffusion and gene regulation}\label{sec:exp_synth}

\textbf{Task.} We randomly sample initial graphs from four distinct graph distributions (grid, small-world, power-law, and community), each comprising $400$ nodes. We then take $120$ irregularly sampled time-stamps spanning $T=0$ to $T=5$. At $12$ time steps, sampled uniformly at random from these snapshots, we randomly add or remove edges following a Bernoulli trial. The final $20$ snapshots are allocated for extrapolation validation, while from the remaining $100$, a random subset of $20$ is used for interpolation validation and the remaining $80$ for training. This means the expected number of times the graph topology changes during training and validation phases are $8$ and $4$, respectively. A batch of four such time series are generated for each of training, validation, and testing. We then simulate node features according to the heat diffusion dynamics governed by Newton's law of cooling and the gene regulatory dynamics governed by the Michaelis–Menten equation. For additional experiments on personal capital and opinion dynamics, see Appendix \ref{app:add_experiments}.

\begin{wrapfigure}{L}{0.5\textwidth}  
    \centering
    \includegraphics[width=\linewidth]{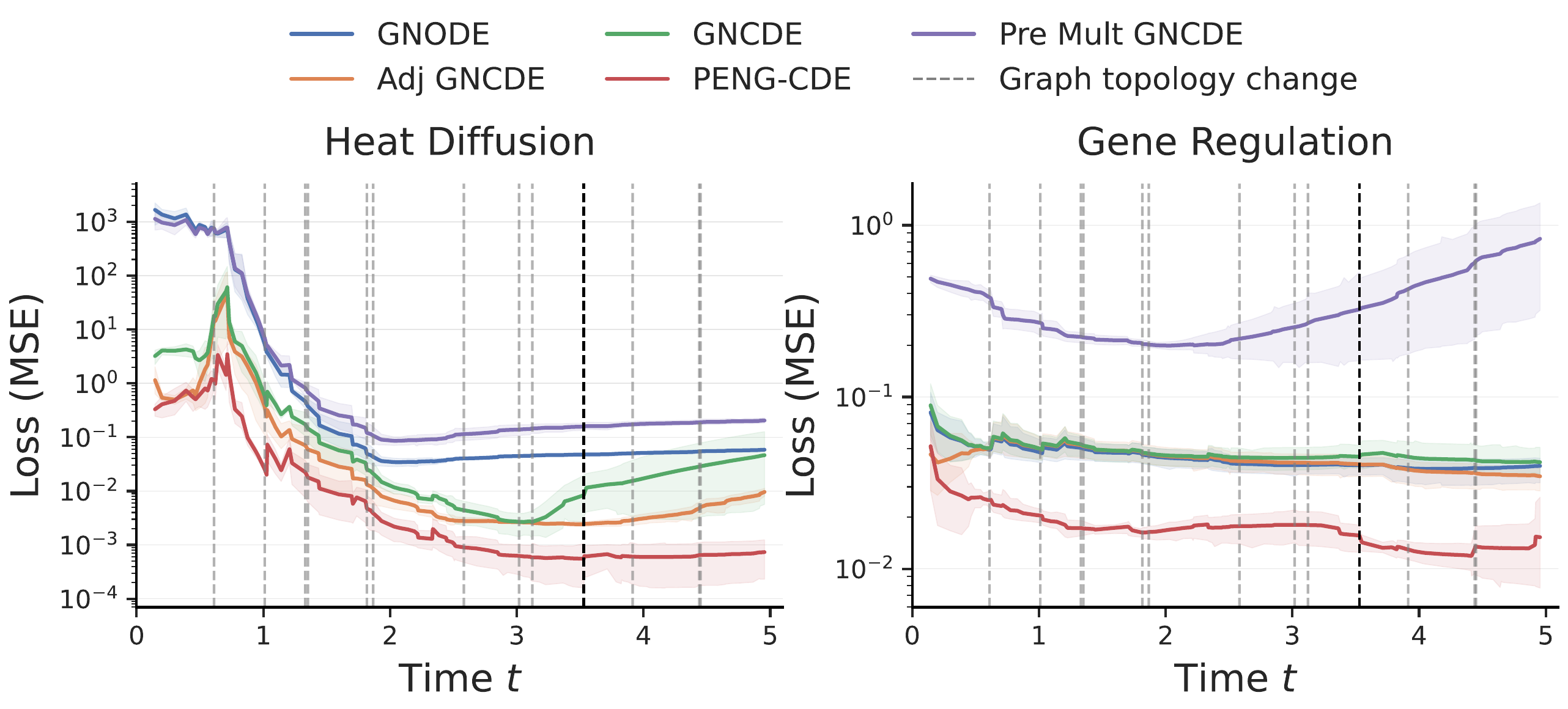}
    \caption{Test losses, plotted against simulation time, for the Graph Neural ODE, three GN-CDE variants, and our proposed Permutation-Equivariant GN-CDE model on the heat diffusion (left) and gene regulation (right) tasks. Dashed vertical lines mark changes in graph topology, while the bold black line indicates the final time point in the training set. Results are reported as means (solid) and ranges (shaded) over a test set with a batch size of four.}
    \label{fig:extrapolation}
    \vspace{-2em}
\end{wrapfigure}

\textbf{Baselines.} We compare the performance of several models, including state-of-the-art recurrent (DCRNN \cite{li2018dcrnn_traffic}), interactive (STIDGCN \cite{Liu2024}), and attentional (ASTGCN \cite{Guo2019}) baselines. For differential equation–based baselines, we first consider a simple model with a constant vector field (Const), that is $\textbf{Z}_s^{(L)} = \textbf{b}$ for all $s \in [t_0, t_N]$ in Equation~\ref{eqn:gn-cde_approx}. Next, we consider the Graph Neural ODE (GNODE) model \cite{GNODEs}, in which the adjacency matrix within the vector field is discretised by flooring the time index. The GNODE is governed by the equation: 
\begin{align} 
    \mathbf{Z}_t = \mathbf{Z}_{t_0} + \int_{t_0}^t f_\theta ( \mathbf{Z}_s, \mathbf{A}_{\lfloor s \rfloor}) \text{d} s
\end{align}
for all $t \in (t_0, t_N ]$ where $\mathbf{A}_{\lfloor s \rfloor}$ is the adjacency matrix corresponding to the graph $\mathcal{G}_{t_k}$ with $t_k = \lfloor s \rfloor$. Additionally, we consider several variants of the GN-CDE. Firstly, we include a simplified fusion model, named Adjacency GN-CDE, which employs interpolated adjacency matrices, i.e. $\Tilde{\mathbf{A}}_s = \mathbf{A}_s$. We note that for the experiments on these two datasets, the GN-CDE \cite{qin2023learning} model implemented the fusion by a simple element-wise summation between the adjacency matrix and its derivative (i.e. $\Tilde{\mathbf{A}}_s = \mathbf{A}_s + \frac{\text{d} \mathbf{A}_s}{\text{d}s}$), which implicitly achieves permutation equivariance - a property not present in the original formulation. To ensure a fair comparison with a non-equivariant GN-CDE variant, we also consider the Premultiplication Fusion GN-CDE (Pre Mult GN-CDE) model. In this variant, the fusion step is performed via premultiplication by learnable matrices $\mathbf{W}_1$ and $\mathbf{W}_2$, such that $\Tilde{\mathbf{A}}_s = \mathbf{W}_1 \mathbf{A}_s + \mathbf{W}_2 \frac{\text{d} \mathbf{A}_s}{\text{d}s}$. Lastly, we include our proposed Permutation Equivariant Neural Graph CDE (PENG-CDE). Experimental results for the heat diffusion and gene regulation tasks are summarised in Table \ref{tab:combined}. Moreover, we visualise the test loss over simulation time for models initialised from the community graph distribution in Figure~\ref{fig:extrapolation}.

\textbf{Finding I: Equivariance improves performance.} Incorporating permutation equivariance (found in models Adjacency GN-CDE, Original GN-CDE, and PENG-CDE) leads to performance improvements of an order of magnitude over the non-equivariant Pre Mult GN-CDE across all considered tasks and graph distributions. 


\textbf{Finding II: Enhanced expressivity via $\mathbf{15}$ basis terms boosts performance.} Integrating all $15$ basis terms of linear equivariant maps into the fusion of our model significantly enhances its expressivity. Compared to the original implementation, the PENG-CDE achieves relative MSE improvements ranging from $30.44\%$ to $73.84\%$ on the heat diffusion task and from $39.71\%$ to $67.06\%$ on gene regulation tasks. One plausible explanation is that summing across rows (basis term $3$ in Appendix~\ref{sec:basis_terms}) corresponds to computing node degrees in an unweighted graph, which facilitates degree normalisation as required in Equation~\ref{eqn:heat_diff}. 

\textbf{Finding III: Additional terms enhance extrapolation behaviour.} As shown in Figure~\ref{fig:extrapolation}, for the heat diffusion task, our PENG-CDE is the only model capable of maintaining constant losses during the extrapolation phase (i.e. over the final $20$ snapshots). This suggests that the added equivariant terms contribute to more robust extrapolation, likely due to the mechanisms discussed above.

\begin{table}[h]
    \centering
    \small
    \renewcommand{\arraystretch}{1.1}
    \setlength{\tabcolsep}{6pt}
    \caption{Comparison of GN-CDE variants and baselines on the heat diffusion (top) and gene regulation (bottom) tasks. Mean MSEs with $95\%$ confidence intervals are reported, with the best mean highlighted in \textbf{bold}, and all results within the corresponding confidence interval are \underline{underlined}. The final row in each table reports the relative improvement of PENG-CDE over the original GN-CDE formulation.}
    \vspace{1ex}
    \label{tab:combined}
    \begin{adjustbox}{center}
        \begin{tabular}{lcccc}
            \toprule
            \multicolumn{5}{c}{\textbf{Heat Diffusion Task} (\textbf{MSE $\downarrow$)}} \\
            \midrule
            \textbf{Model} & \textbf{Community} & \textbf{Grid} & \textbf{Power Law} & \textbf{Small World} \\
            \midrule
            DCRNN \cite{li2018dcrnn_traffic} & $0.722 \pm 1.145$ & $70.392 \pm 108.473$ & $1.690 \pm 1.456$ & $84.690 \pm 105.696$ \\  
            STIDGCN \cite{Liu2024} & $0.554 \pm 0.393$ & $4.297 \pm 0.975$ & $0.907    \pm 0.268$ & $1.826 \pm 0.240$ \\
            ASTGCN \cite{Guo2019} & $2.188 \pm 0.656$ & $15.480 \pm 1.757$ & $3.849 \pm 0.832$ & $8.269 \pm 1.094$ \\
            STG-NCDE \cite{Choi2022}& $2.091 \pm 0.645$ & $11.989 \pm 1.090$ & $3.518 \pm 1.105$ & $6.902 \pm 1.303$ \\
            \midrule
            Constant & $1.936 \pm 0.550$ & $11.155 \pm 0.669$ & $3.147 \pm 0.850$ & $6.286 \pm 1.056$ \\
            Graph Neural ODE \cite{GNODEs} & $0.237 \pm 0.322$ & $1.001 \pm 0.751$ & $\underline{0.270 \pm 0.310}$ & $\underline{0.311 \pm 0.268}$ \\
            Adjacency GN-CDE & $0.208 \pm 0.240$ & $0.691 \pm 0.887$ & $\mathbf{0.258} \pm \mathbf{0.288}$ & $\underline{0.248 \pm 0.240}$ \\
            Pre Mult GN-CDE & $1.968 \pm 0.548$ & $12.262 \pm 1.437$ & $7.440 \pm 4.458$ & $6.829 \pm 0.644$ \\
             Original GN-CDE  \cite{qin2023learning} & $0.366 \pm 0.400$ & $1.324 \pm 0.630$ & $\underline{0.417 \pm 0.314}$ & $0.552 \pm 0.470$ \\
             \hdashline
            PENG-CDE (ours) & $\mathbf{0.096} \pm \mathbf{0.051}$ & $\mathbf{0.481} \pm \mathbf{0.195}$ & $\underline{0.290 \pm 0.265}$ & $\mathbf{0.247} \pm \mathbf{0.215}$ \\
            \midrule
            Relative Improvement  & $73.84\%$         & $63.70\%$         & $30.44\%$         & $55.20\%$ \\
            \midrule \midrule
            \multicolumn{5}{c}{\textbf{Gene Regulation Task} (\textbf{MSE $\downarrow$)}} \\
            \midrule
            \textbf{Model} & \textbf{Community} & \textbf{Grid} & \textbf{Power Law} & \textbf{Small World} \\
            \midrule
            DCRNN & $159.095 \pm 149.970$ & $28.784 \pm 16.926$ & $77.469 \pm 28.178$ & $27.300 \pm 11.641$ \\
            STIDGCN & $14.579 \pm 2.815$ & $0.633 \pm 0.163$ & $4.828 \pm 0.694$ & $0.611 \pm 0.162$ \\
            ASTGCN & $13.366 \pm 3.302$ & $0.695 \pm 0.178$ & $4.722 \pm 0.644$ & $0.579 \pm 0.116$ \\
            STGNCDE & $88.354 \pm 15.126$ & $8.753 \pm 0.898$ & $20.498 \pm 2.633$ & $6.760 \pm 0.787$ \\
            \midrule
            Constant & $36.307 \pm 2.609$ & $1.390 \pm 0.168$ & $7.099 \pm 0.382$ & $0.772 \pm 0.126$ \\
            Graph Neural ODE  \cite{GNODEs} & $8.548 \pm 3.212$ & $\mathbf{0.167} \pm \mathbf{0.191}$ & $\mathbf{0.372} \pm \mathbf{0.398}$ & $\underline{0.294 \pm 1.308}$ \\
            Adjacency GN-CDE & $8.909 \pm 6.311$ & $1.476 \pm 2.504$ & $\underline{0.498 \pm 0.126}$ & $\underline{0.195 \pm0.046}$ \\
            Pre Mult GN-CDE & $153.084 \pm 149.609$ & $2.553 \pm 0.251$ & $6.978 \pm 3.045$ & $1.591 \pm 0.146$ \\
            Original GN-CDE \cite{qin2023learning} & $10.717 \pm 7.079$ & $0.457 \pm 0.167$ & $0.822 \pm 0.299$ & $\underline{0.323 \pm 0.154}$ \\
            \hdashline
            PENG-CDE (ours) & $\mathbf{4.566} \pm \mathbf{2.780}$ & $\underline{0.247 \pm 0.090}$ & $\underline{0.526 \pm 0.220}$ & $\mathbf{0.186} \pm \mathbf{0.484}$ \\
            \midrule
            Relative Improvement  & $67.06\%$   & $45.90$\%     & $39.71\%$         & $54.14\%$ \\
            \bottomrule
        \end{tabular}
    \end{adjustbox}
\end{table}

\begin{wrapfigure}{R}{0.6\textwidth}
    \centering
    \includegraphics[width=\linewidth]{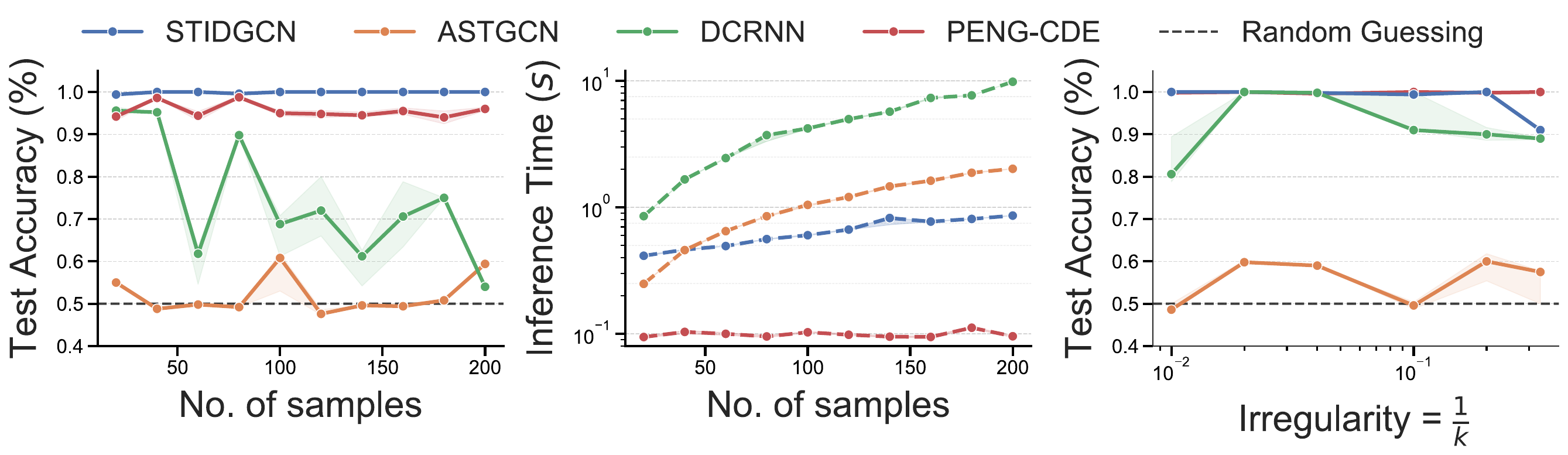}
    \caption{Classification accuracy on the SIR model for STIDGCN, ASTGCN, DCRNN, and our PENG-CDE as a function of the number of observed timesteps (left) and sampling irregularity (right). Inference time with increasing numbers of observations is shown in the middle panel.}
    \label{fig:oversampling}
    \vspace{-1em}
\end{wrapfigure}


\textbf{Oversampling and irregularity.} In Figure \ref{fig:oversampling}, we compare the performance of the PENG-CDE with DCRNN \cite{li2018dcrnn_traffic}, STIDGCN \cite{Liu2024} and ASTGCN \cite{Guo2019} on data generated from the graph SIR disease spread model \cite{Kerr1976}. Its parameters are chosen to produce trajectories both with and without an outbreak, and the task is to classify each trajectory into one of the two categories. To study the effect of oversampling, we simulate each trajectory up to a fixed terminal time \(T = 1\) and train and evaluate the models on datasets with an increasing number of observation points (left and middle panels). We also investigate how performance varies with the regularity of the sampling grid: by drawing observation times from a Gamma distribution with controlled shape parameter $k$, we obtain series with different degrees of irregularity and plot test accuracy as irregularity increases (right panel). For implementation details, see Appendix~\ref{sec:impl_synth}.


\textbf{Finding IV: PENG-CDE is robust to oversampling and irregular sampling.}  
CDE-based models such as PENG-CDE decouple the computational complexity of their forward passes from the number of input observations: the number of vector field evaluations is determined by the ODE solver, not by the sampling rate of the data. This property makes them inherently robust to oversampling. As shown in Figure~\ref{fig:oversampling}, increasing the sampling frequency negatively impacts both the performance and runtime of the recurrent baseline (DCRNN), while our model remains largely unaffected. Although STIDGCN also maintains performance under oversampling, its computational cost increases with the number of observations, in contrast to the constant runtime of our approach. Likewise, PENG-CDE sustains high performance across all levels of irregularity, while the other models exhibit degraded performance.

\subsection{Real-world tasks}\label{sec:exp_real_world}

\begin{table}
    \centering
    \small  
    \renewcommand{\arraystretch}{1.1}  
    \setlength{\tabcolsep}{6pt}  
    \caption{Results of experiments for the \texttt{england-covid} and \texttt{twitter-tennis} tasks from the Pytorch Geometric Temporal datasets \cite{rozemberczki2021pytorch}. Mean values and $95\%$ confidence intervals are reported, with the best mean highlighted in \textbf{bold} and all results within the confidence interval around the best mean are \underline{underlined}.}
    \vspace{1ex}
    \begin{tabular}{l|cc|cc}
        \toprule
         & \multicolumn{2}{c}{\texttt{england-covid}} & \multicolumn{2}{c}{\texttt{twitter-tennis}} \\
         Model & \multicolumn{2}{c}{$\mathbf{MSE} \downarrow$} &  \multicolumn{2}{c}{$\mathbf{MSE} \downarrow$} \\
        & Validation & Test & Validation & Test \\
        \midrule
        DCRNN \cite{li2018dcrnn_traffic} & $\underline{0.898 \pm 0.126}$ & $\underline{1.021 \pm 0.203}$ & $\underline{0.472 \pm 0.136}$ & $\underline{0.455 \pm 0.087}$ \\
        ASTGCN \cite{Guo2019} & $1.235 \pm 0.139$ & $1.283 \pm 0.257$ & $0.545 \pm 0.128$ & $0.530 \pm 0.078$ \\
        STIDGCN \cite{Liu2024} & $0.916 \pm 0.111$ & $\underline{0.933 \pm 0.100}$ & $0.524 \pm 0.140$ & $0.495 \pm 0.059$ \\
        \hdashline
        GNODE \cite{GNODEs} & $\textbf{0.802} \pm \textbf{0.184}$ & $\underline{0.945 \pm 0.251}$ & $\underline{0.419 \pm 0.068}$ & $0.525 \pm 0.043$ \\
        STG-NCDE \cite{Choi2022} & $1.484 \pm 0.515$ & $\underline{1.778 \pm 1.084}$ & $\underline{0.372 \pm 0.108}$ & $\underline{0.453 \pm 0.035}$ \\
        GN-CDE \cite{qin2023learning} & $\underline{0.892 \pm 0.143}$ & $\underline{0.962 \pm 0.278}$ & $\textbf{0.369} \pm \textbf{0.070}$ & $\underline{0.443 \pm 0.053}$ \\
        PENG-CDE (ours)  & $\underline{0.836 \pm 0.122}$ & $\textbf{0.913} \pm \textbf{0.200}$ & $\underline{0.391 \pm 0.069}$ & $\textbf{0.440} \pm \textbf{0.053}$ \\
        \bottomrule
    \end{tabular}
    \label{tab:pgt_erperiments}
\end{table}

We evaluate our framework on two types of real-world datasets: snapshot-based and event-based. First, we consider two node regression tasks from PyTorch Geometric Temporal \cite{rozemberczki2021pytorch} and then two node affinity prediction tasks from the Temporal Graph Benchmark (TGB) \cite{huang2023temporal, huang2024tgb2}. Implementation details are provided in Appendix \ref{sec:impl_real_world}.

\textbf{Snapshot-based datasets.} In this setting, we observe a sequence of dense graphs, one at each timestep. To demonstrate that our modifications enable our model to better capture dynamic graph topologies, we compare it against several differential equation-based models: the Graph Neural ODE \cite{GNODEs}, the Spatio-Temporal Graph Neural Controlled Differential Equation (STG-NCDE) \cite{Choi2022}, and the original GN-CDE \cite{qin2023learning}. Additionally, we include comparisons with recurrent (DCRNN \cite{li2018dcrnn_traffic}), attentional (ASTGCN \cite{Guo2019}), and interactive (STIDGCN \cite{Liu2024}) models. We benchmark performance using the \texttt{england-covid} and \texttt{twitter-tennis} datasets from PyTorch Geometric Temporal \cite{rozemberczki2021pytorch} - the only datasets in the library exhibiting dynamic graph topologies. Table~\ref{tab:pgt_erperiments} reports the mean MSE with $95\%$ confidence intervals on both the validation and test sets over $10$ random seeds.

\textbf{Finding V: PENG-CDE generalises well on dynamic snapshot datasets.} While not achieving the lowest validation error, PENG-CDE attains the lowest test error on both tasks, indicating better generalisation and reduced overfitting compared to other models.

\begin{wraptable}{L}{0.6\textwidth}
    \vspace{-1em}
    \centering
    \footnotesize
    \setlength{\tabcolsep}{4pt}  
    \caption{Experimental results for the \texttt{tgbn-trade} and \texttt{tgbn-genre} node affinity prediction tasks from the Temporal Graph Benchmark datasets \cite{huang2023temporal, huang2024tgb2}. Results marked with $\dagger$ are from \cite{huang2023temporal}, those with $\ddagger$ are from \cite{yu2023betterdynamicgraphlearning}, and those with $^*$ are from \cite{tjandra2024enhancingexpressivitytemporalgraph}. Mean values and standard deviations are reported with results within one standard deviation of the best mean highlighted in \textbf{bold} (for deterministic and learned models separately).}
    \vspace{1ex}
    \begin{tabular}{@{}l|cc@{}}
        \toprule
         & \texttt{trade} & \texttt{genre} \\
         Model & \multicolumn{2}{c}{$\mathbf{NDCG@10 \uparrow}$} \\
        \midrule
        Persistent Forecast (L)$^\dagger$ & $\mathbf{0.855}$ & $0.357$ \\
        Moving Avg (L)$^\dagger$ & $0.823$ & $\mathbf{0.509}$ \\
        Moving Avg (M) & $0.777$ & $0.472$ \\
        \midrule
        JODIE$^\ddagger$ \cite{kumar2019predicting} & $0.374${\scriptsize$\pm0.09$} & $0.350${\scriptsize$\pm0.04$} \\
        TGAT$^\ddagger$ \cite{tgat_iclr20} & $0.375${\scriptsize$\pm0.07$} & $0.352${\scriptsize$\pm0.03$} \\
        CAWN$^\ddagger$ \cite{wang2022inductiverepresentationlearningtemporal} & $0.374${\scriptsize$\pm0.09$} & -- \\
        TCL$^\ddagger$ \cite{wang2021tcltransformerbaseddynamicgraph} & $0.375${\scriptsize$\pm0.09$} & $0.354${\scriptsize$\pm0.02$} \\
        GraphMixer$^\ddagger$ \cite{cong2023reallyneedcomplicatedmodel} & $0.375${\scriptsize$\pm0.11$} & $0.352${\scriptsize$\pm0.03$} \\
        DyGFormer$^\ddagger$ \cite{yu2023betterdynamicgraphlearning} & $0.388${\scriptsize$\pm0.64$} & $0.365${\scriptsize$\pm0.20$} \\
        DyRep$^\dagger$ \cite{trivedi2018representationlearningdynamicgraphs} & $0.374${\scriptsize$\pm0.001$} & $0.351${\scriptsize$\pm0.001$} \\
        TGN$^\dagger$ \cite{tgn_icml_grl2020} & $0.374${\scriptsize$\pm0.001$} & $0.367${\scriptsize$\pm0.058$} \\
        TGNv2$^*$ \cite{tjandra2024enhancingexpressivitytemporalgraph} & $\mathbf{0.735}${\scriptsize$\pm0.006$} & $0.469${\scriptsize$\pm0.002$} \\
        \midrule
        STG-NCDE \cite{Choi2023} & $0.618${\scriptsize$\pm0.024$} & $0.438${\scriptsize$\pm0.038$} \\
        GN-CDE \cite{qin2023learning} & $0.713${\scriptsize$\pm0.026$} & $0.460${\scriptsize$\pm0.016$} \\
        \hdashline
        PENG-CDE & $0.716${\scriptsize$\pm0.029$} & $\mathbf{0.523}${\scriptsize$\pm0.017$} \\
        \quad + Source/Target Id & $\mathbf{0.734}${\scriptsize$\pm0.024$} & -- \\
        \bottomrule
    \end{tabular}
    \label{tab:tgb}
\end{wraptable}

\textbf{Event-based datasets.} In many applications, a temporal graph is represented as a sequence of events $\mathcal{G} = \{(e_i, t_i, x_i)\}_{i=1}^{n}$, where each event \(e_i\) (e.g., an interaction between nodes \(u\) and \(v\)) occurs at time \(t_i\) and is associated with data \(x_i \in \mathbb{R}^d\). Since GN-CDEs are not inherently designed for processing individual events, we aggregate events into snapshots. In more detail, given a time window of length \(\Delta t\), we partition the overall time span \([t_0, t_n]\) into $m = \left\lfloor \frac{t_n - t_0}{\Delta t} \right\rfloor$ non-overlapping intervals. For each interval indexed by \(k \in \{0, 1, \dots, m-1\}\), we define the snapshot graph \(\mathcal{G}_k\) to include all events (or edges) that occur within the time window $[t_0 + k\Delta t,\, t_0 + (k+1)\Delta t]$. This then gives us a sequence of snapshots $\{\mathcal{G}_0, \dots, \mathcal{G}_{m-1}\}$, which we can integrate into the setting above. We evaluate our framework in this event-based setting on the \texttt{trade} and \texttt{genre} node affinity prediction tasks from the Temporal Graph Benchmark (TGB) \cite{huang2023temporal, huang2024tgb2}. These two tasks were selected from the four available in the benchmark, as the dense adjacency matrices required by Neural ODE-based models make the remaining tasks computationally infeasible within our constraints.

\textbf{Baselines.} For the event-based experiments, we first compare against three simple heuristics based on ground-truth labels and messages: ‘Persistent Forecast (L)’ and ‘Moving Average (L)’, which operate on labels, and ‘Moving Average (M)’, which is applied to messages. We then benchmark several learned models, including JODIE \cite{kumar2019predicting}, TGAT \cite{tgat_iclr20}, CAWN \cite{wang2022inductiverepresentationlearningtemporal}, TCL \cite{wang2021tcltransformerbaseddynamicgraph}, GraphMixer \cite{cong2023reallyneedcomplicatedmodel}, DyGFormer \cite{yu2023betterdynamicgraphlearning}, DyRep \cite{trivedi2018representationlearningdynamicgraphs}, TGN \cite{tgn_icml_grl2020}, and TGNv2 \cite{tjandra2024enhancingexpressivitytemporalgraph}.
Additionally, we include STG-NCDE \cite{Choi2023}, GN-CDE \cite{qin2023learning}\footnote{Here, the GN-CDE employs the original element-wise additive fusion, $\tilde{\mathbf{A}}_s = \mathbf{A}_s + \frac{\mathrm{d}\mathbf{A}_s}{\mathrm{d}s}$ \cite{qin2023learning}. Concurrently to this work, \citet{qin2025learning} proposed a non-linear element-wise fusion for GN-CDE tailored to the TGBN datasets that achieves comparable performance to the permutation-equivariant fusion of PENG-CDE.
}, and our proposed PENG-CDE.
It is a known issue that the heuristics above often outperform learned models on node-affinity prediction tasks \cite{huang2024tgb2}; see \cite{tjandra2024enhancingexpressivitytemporalgraph} for a detailed discussion. This motivated TGNv2 to introduce a mechanism that learns node embeddings to distinguish source and target nodes for each interaction, a feature tailored specifically to the node affinity prediction task. To ensure a fair comparison, we incorporate an analogous mechanism into a variant of our model. Further details are provided in Appendix~\ref{sec:sti}. The experimental results are summarised in Table~\ref{tab:tgb}.

\textbf{Finding VI: PENG-CDE achieves state-of-the-art performance on TGB.} Without source-target identification, our model is outperformed only by TGNv2 on the \texttt{tgbn-trade} task; with source-target identification, it performs on par. Moreover, on \texttt{tgbn-genre}, our model significantly outperforms all baselines; notably, it is the only machine-learned approach to surpass all heuristic baselines.

\subsection{Ablation studies}\label{sec:ablation}

\begin{wraptable}{R}{0.6\textwidth}
    \vspace{-1em}
    \centering
    \scriptsize
    \renewcommand{\arraystretch}{0.95}
    \setlength{\tabcolsep}{4pt}
    \caption{Fusion operation weights of the Permutation Equivariant GN-CDE model. Weights with absolute value larger than $0.1$ are in bold.}
    \label{tab:fusion_params}
    \vspace{1ex}
    \begin{tabular}{c@{\hskip 2pt}c|c@{\hskip 2pt}c|c@{\hskip 2pt}c}
         \toprule
        \multirow{2}{*}{\textbf{Operation}} & & \multicolumn{2}{c|}{\textbf{Layer 1}} & \multicolumn{2}{c}{\textbf{Layer 2}} \\ 
        \cmidrule(lr){3-4} \cmidrule(lr){5-6}
        & & $\mathbf{A}_s$ & $\text{d}\mathbf{A}_s$ & $\mathbf{A}_s$ & $\text{d}\mathbf{A}_s$ \\
        \midrule
        Identity && $\mathbf{1.6129}$ & $\mathbf{0.9591}$ & $\mathbf{1.0747}$ & $\mathbf{0.9100}$ \\
        Transpose && $\mathbf{0.1026}$ & $-0.0210$ & $-0.0270$ & $0.0535$ \\
        Diagonalisation && $0.0297$ & $-0.0754$ & $0.0072$ & $0.0317$ \\
        \midrule
        \multirow{3}{*}{Sum rows on} & rows & $\mathbf{0.7753}$ & $-0.0635$ & $0.0358$ & $-0.0114$ \\
        & columns & $\mathbf{0.3407}$ & $0.0015$ & $-0.0242$ & $0.0790$ \\
        & diagonal & $-0.0875$ & $-0.0222$ & $-0.0707$ & $0.0869$ \\
        \midrule
        \multirow{2}{*}{Sum all on} & all & $\mathbf{0.4315}$ & $\mathbf{0.4260}$ & $0.0039$ & $0.0020$ \\
        & diagonal & $-0.0002$ & $-0.0001$ & $0.0362$ & $0.0088$ \\
        \bottomrule
    \end{tabular}
    \vspace{-2em}
\end{wraptable}

\textbf{Fusion.} To understand how our model fuses the adjacency matrix and its derivative, we analyse the learned weights of the $15$ basis terms (Table \ref{tab:fusion_params}) in the heat diffusion task described in Section~\ref{sec:exp_synth}. Both GCN layers assign the largest weights to the identity operations. This is expected, given heat diffusion aggregates over node neighbourhoods. Because the constructed graph is undirected, the transpose operation is equivalent to the identity, allowing the corresponding weight in layer $1$ to be disregarded. Layer $1$ also assigns importance to non-identity components. Large weights are given to the summations of adjacency matrix rows and columns, corresponding to node degree computation, which indicates attention to global graph properties. Similarly, large weights are observed for the total sums of both the adjacency matrix and its derivative, which may correspond to a form of normalisation. Overall, we see that the model utilises the additional expressivity over GN-CDEs.

\section{Conclusion}

In this work, we introduced Permutation Equivariant  Neural Graph CDEs - a geometrically grounded approach to temporal graph representation learning that balances parameter efficiency with formal expressivity. In the linear setting, we showed that our model can be derived as a projection of Graph Neural CDEs onto the space of permutation equivariant functions. Through synthetic experiments, we demonstrated that both the imposed equivariance and enhanced expressivity improve performance in both interpolation and extrapolation tasks. Additionally, our model retains the strengths of Neural CDEs in handling oversampling and irregular time intervals. On the TGB-\texttt{genre} dataset, our model achieves a new state-of-the-art -- becoming the first learned method to surpass a moving average baseline.

\subsection{Limitations and future work}\label{sec:limit_fut_work}

Like the original GN-CDE, PENG-CDEs have quadratic memory complexity in the number of nodes due to the need to store dense adjacency matrices, which restricts scalability to large, sparse graphs. This constraint arises primarily from JAX’s limited support for sparse matrix operations. As shown in Table~\ref{tab:scaling_results} in Appendix~\ref{app:runtime}, once memory allocation constraints are met, PENG-CDE scales much more favourably in runtime compared to GN-CDE -- demonstrating its potential for efficient modelling at larger scales.

Future work includes addressing this scalability bottleneck and investigating the impact of more tailored hyperparameter configurations, such as the choice of interpolation schemes, ODE solvers, and vector field architectures. In addition, advanced solvers like Log-NCDE \cite{Walker2024LogNCDE}, originally developed for Euclidean paths, could be adapted to the graph setting. While we chose dynamic adjacency matrices as the control signal for the CDE, this is just one possible vectorisation of graph structure. Prior work \cite{GroheGraphEmb2020} suggests that adjacency matrices are often suboptimal for representing graph properties. Identifying more expressive or task-aligned graph representations could further improve performance and remains an exciting research direction. Finally, while Neural CDEs are known to be universal approximators on Euclidean domains \cite{kidger2020neuralcde}, and recent work has also provided generalisation bounds \cite{bleistein:hal-04876009}, no such theoretical guarantees currently exist for graph-based Neural CDEs. Establishing these would be a valuable theoretical contribution to the growing field of temporal graph representation learning.

\section*{Acknowledgements}
This work is supported by the Helmholtz Association Initiative and Networking Fund on the HAICORE@KIT partition. Benjamin Walker is funded by the Hong Kong Innovation and Technology Commission (InnoHK Project CIMDA).

\bibliographystyle{plainnat}
\bibliography{references}


\newpage

\appendix

\newpage

\section{Permuation equivariant linear basis terms}\label{sec:basis_terms}

\citet{maron2018invariant} derive the $15$ basis elements of all linear permutation equivariant maps $L : \mathbb{R}^{n^2} \rightarrow \mathbb{R}^{n^2}$ on edge-valued signals $\mathbf{A} \in \mathbb{R}^{n^2 \times 1}$. They show the following $15$ maps span this space:

We denote by $\mathbf{1} \in \mathbb{R}^n$ the vector whose entries are all equal to one, and by $\text{diag}$ the operator that either places a vector on the diagonal of a matrix or extracts the diagonal of a matrix, depending on context. With this notation, the basis elements of $\mathfrak{E}_{\Sigma_n}(2,1)^{1}$ can be listed explicitly as follows:
    \begin{enumerate}
        \item The identity and transpose operations: $L(\mathbf{A}) = \mathbf{A}, L(\mathbf{A}) = \mathbf{A}^T$,
        
        \item Elimination of non-diagonal elements: $L(\mathbf{A}) = \text{diag}(\text{diag}(\mathbf{A}))$,
        
        \item Sum of rows replicated on rows/columns/diagonal: $L(\mathbf{A}) = \mathbf{1}\mathbf{1}^T\mathbf{A}, L(\mathbf{A}) = \mathbf{1}(\mathbf{A}\mathbf{1})^T, L(\mathbf{A}) = \text{diag}(\mathbf{A}\mathbf{1})$,
        
        \item Sum of columns replicated on rows/columns/diagonal: $L(\mathbf{A}) = \mathbf{A}^T\mathbf{1}\mathbf{1}^T, L(\mathbf{A}) = \mathbf{1}(\mathbf{A}^T\mathbf{1})^T, L(\mathbf{A}) = \text{diag}(\mathbf{A}^T\mathbf{1})$,
        
        \item Sum of all matrix entries replicated on all entries/diagonal: $L(\mathbf{A}) = \mathbf{1}^T\mathbf{A}\mathbf{1} \cdot \mathbf{1}\mathbf{1}^T, L(\mathbf{A}) = \mathbf{1}^T\mathbf{A}\mathbf{1} \cdot \text{diag}(\mathbf{1})$, 
        
        \item Sum of all diagonal entries replicated on all entries/diagonal: $L(\mathbf{A}) = \mathbf{1}^T\text{diag}(\mathbf{A}) \cdot \mathbf{1}\mathbf{1}^T, L(\mathbf{A}) = \mathbf{1}^T\text{diag}(\mathbf{A}) \cdot \text{diag}(\mathbf{1})$, 
        
        \item Diagonal elements replicated on rows/columns: $L(\mathbf{A}) = \text{diag}(\mathbf{A})\mathbf{1}^T, L(\mathbf{A}) = \mathbf{1} \text{diag}(\mathbf{A})^T$,
    \end{enumerate}

\section{Projection theorem}\label{sec:projection_theorem}

\begin{definition}[Projection]
    Let $\mathcal{H}$ be a Hilbert space with inner product $( \cdot, \cdot)$. A linear map $p: \mathcal{H} \to \mathcal{H}$ is called a \emph{projection} if $p \circ p = p$. A projection is called \emph{orthogonal} if for all $x \in \mathcal{H}$ it holds that for all $y \in \mathcal{H}$, $p(y) = 0$ implies $(p(x), y) = 0$.
\end{definition}

Intuitively, this is saying that applying a projection operator twice yields the same result as applying it once, which captures the idea of ``selecting'' or ``filtering out'' a specific component of a vector. The additional condition for an orthogonal projection ensures that the difference between any vector and its projected counterpart is perpendicular to the subspace, embodying the concept of the closest approximation in the geometry of the space.

\begin{theorem}[Projection Theorem, e.g. \cite{aubin1979applied}, Chapter $1$]\label{thm:proj_thm}
    Let $\mathcal{H}$ be a Hilbert space and $U$ a closed linear subspace. Then every $x \in \mathcal{H}$ can uniquely be written as $x = u + u^*$ where $u \in U$, $u^* \in U^\perp \coloneqq \{y \in \mathcal{H} : (y, u) = 0 \, \,\forall u \in U\}$. Moreover, the map $P_U: \mathcal{H} \rightarrow \mathcal{H}$ defined as $P_U(x) = u$ is an orthogonal projection and satisfies 
    \begin{align}
        \Vert P_U(x) - x \Vert = \inf \limits_{u^\prime \in U} \Vert u^\prime - x \Vert.
    \end{align}
    for all $x \in \mathcal{H}$.
\end{theorem}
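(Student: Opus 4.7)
The plan is to follow the classical Hilbert-space argument, building the decomposition via a minimization problem and then extracting the orthogonality and projection properties from the minimizer. Given $x \in \mathcal{H}$, set $d := \inf_{u' \in U} \|u' - x\|$ and let $(u_n) \subset U$ be a minimizing sequence, i.e.\ $\|u_n - x\| \to d$. The first milestone is to prove that $(u_n)$ is Cauchy; this is the decisive analytic step, and it hinges on the \emph{parallelogram identity} in $\mathcal{H}$ applied to $u_n - x$ and $u_m - x$. One obtains
\[
\|u_n - u_m\|^2 = 2\|u_n - x\|^2 + 2\|u_m - x\|^2 - 4\bigl\|\tfrac{u_n + u_m}{2} - x\bigr\|^2,
\]
and since $\tfrac{u_n+u_m}{2} \in U$ by linearity of $U$, the last term is bounded below by $4d^2$. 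Letting $n,m \to \infty$ forces $\|u_n - u_m\| \to 0$. Closedness of $U$ then delivers a limit $u \in U$ with $\|u - x\| = d$.

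Next I would set $u^* := x - u$ and verify $u^* \in U^\perp$. For any $v \in U$ and $\lambda \in \mathbb{R}$ (or $\mathbb{C}$), the function $\lambda \mapsto \|u + \lambda v - x\|^2 = d^2 - 2\lambda\,\mathrm{Re}(u^*, v) + \lambda^2 \|v\|^2$ is minimized at $\lambda = 0$ by the defining property of $u$, so the first-order condition gives $(u^*, v) = 0$ for all $v \in U$. This yields the orthogonal decomposition $x = u + u^*$. Uniqueness is then immediate: if $x = u_1 + u_1^* = u_2 + u_2^*$ are two such decompositions, then $u_1 - u_2 = u_2^* - u_1^* \in U \cap U^\perp$, and pairing this element with itself gives $\|u_1 - u_2\|^2 = 0$.

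It remains to verify that the induced map $P_U(x) := u$ is a linear orthogonal projection. Linearity follows from uniqueness applied to $\alpha x + \beta y = (\alpha P_U x + \beta P_U y) + (\alpha(x - P_U x) + \beta(y - P_U y))$, noting that the first summand lies in $U$ and the second in $U^\perp$. Idempotence $P_U \circ P_U = P_U$ is clear because $P_U x \in U$ already has the trivial decomposition $P_U x = P_U x + 0$. Orthogonality of $P_U$ in the sense of the definition just above is exactly the content of the step showing $u^* = x - P_U x \in U^\perp$: if $P_U y = 0$ then $y \in U^\perp$, so $(P_U x, y) = (x, y) - (x - P_U x, y) = (x, y) - 0$; one then notes that since $P_U x \in U$ and $y \in U^\perp$, this gives $(P_U x, y) = 0$. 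The minimum-distance identity $\|P_U(x) - x\| = \inf_{u' \in U}\|u' - x\|$ is built into the construction.

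The main obstacle is the Cauchy-sequence argument: one must exploit the Hilbert structure through the parallelogram identity together with the linearity (convexity suffices) of $U$ to conclude that midpoints of minimizing elements push the minimum gap to zero. Everything else — orthogonality via first-order optimality, uniqueness via $U \cap U^\perp = \{0\}$, and the projection properties — is essentially formal once existence of the minimizer is in hand.
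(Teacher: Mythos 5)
Your proof is correct. The paper does not actually prove this statement --- it is quoted as a standard background result with a citation to Aubin's \emph{Applied Functional Analysis} --- so there is no in-paper argument to compare against; what you have written is precisely the classical textbook proof that the cited reference contains: a minimizing sequence made Cauchy via the parallelogram identity (using convexity of $U$ so that midpoints stay in $U$), completeness of $\mathcal{H}$ plus closedness of $U$ to obtain the minimizer, the first-order optimality condition to get $x - u \in U^\perp$, and uniqueness, linearity, idempotence and orthogonality of $P_U$ as formal consequences. Two small points of polish: in the complex case the first-order condition in a real parameter $\lambda$ only kills $\mathrm{Re}(u^*, v)$, so you should also test with $iv$ (or $e^{i\theta}v$) to conclude $(u^*, v) = 0$; and in your verification of orthogonality of $P_U$ the intermediate identity $(P_U x, y) = (x,y) - (x - P_U x, y)$ is redundant --- $P_U x \in U$ and $y \in U^\perp$ already give $(P_U x, y) = 0$ directly. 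Neither affects the validity of the argument.
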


The Projection Theorem asserts that every vector in a Hilbert space can be uniquely decomposed into a sum of two parts: one lying in a closed subspace and the other in its orthogonal complement, thus clarifying the geometric structure of the space. Moreover, it guarantees the projection onto a subspace is the unique map that assigns to each element $x \in \mathcal{H}$ the closest element from that subspace, thereby establishing a rigorous method for optimal approximation in terms of the inner product.

\section{General theory of equivariance}\label{sec:equivariance}

In the following, we formalise notions of how symmetries can ``act'' on data and how we want to define functions that respect these symmetries.

\subsection{Equivariance of static functions}

\begin{definition}[Group Action]
Let \(G\) be a group and \(X\) a set. A \emph{group action} of \(G\) on \(X\) is a function $\cdot : G \times X \to X$ satisfying:
\begin{enumerate}
    \item \textbf{Identity:} for all \(x \in X\), \(e \cdot x = x\), where \(e\) is the identity element of \(G\),
    \item \textbf{Compatibility:} for all \(g, h \in G\) and \(x \in X\), \((gh) \cdot x = g \cdot (h \cdot x)\).
\end{enumerate}
\end{definition}

This definition formalises the idea that the elements of the group \(G\) can act on the set \(X\) in a way that reflects the group's structure. It essentially captures the concept of symmetry by showing how each group element transforms or ``moves'' elements in \(X\).

\begin{definition}[Group Representation]
Let \(G\) be a group and \(V\) a vector space. A \emph{group representation} of \(G\) on \(V\) is a homomorphism $\rho: G \to \operatorname{GL}(V)$ such that:
\begin{itemize}
    \item \textbf{Identity:} \(\rho(e) = I_V\), where \(e\) is the identity element of \(G\) and \(I_V\) is the identity map on \(V\),
    \item \textbf{Compatibility:} For all \(g_1, g_2 \in G\), \(\rho(g_1g_2) = \rho(g_1)\rho(g_2)\).
\end{itemize}
\end{definition}

This definition realises abstract group elements as concrete, invertible linear transformations on \(V\), thereby representing the group's symmetry in a linear framework. It provides a way to analyse the structure of \(G\) by studying how it acts on a vector space.

\begin{definition}[Equivariance]
    Let $\rho: G \to \operatorname{GL}(V)$ and $\sigma: G \to \operatorname{GL}(W)$ be representations of a group \(G\) on vector spaces \(V\) and \(W\), respectively. A function \(f: V \to W\) is said to be \emph{\(G\)-equivariant} if for all \(g \in G\) and \(v \in V\), 
    \[
    f\bigl(\rho(g)(v)\bigr) = \sigma(g)\bigl(f(v)\bigr).
    \]
\end{definition}

This definition states that applying the group action before or after the function \(f\) produces the same outcome. Equivariance ensures that \(f\) preserves the symmetry structure imposed by \(G\), making it a natural and symmetry-respecting mapping between the spaces.


\subsection{Haar measure}

\begin{definition}[Haar measure]
    Let $G$ be a compact Hausdorff topological group. There exists a unique regular Borel measure $\mu$ on $G$ invariant under both left and right translations:
    \[
      \mu(gE) = \mu(Eg) = \mu(E)
      \quad\text{for all }g\in G,\;E\subseteq G\text{ Borel}
    \]
    with total mass $1$, i.e. $\mu(G) = 1$. We call this measure the \textit{Haar measure}.
\end{definition}
\begin{remark}
    If $G$ is finite of order $|G|$, the Haar measure reduces to the uniform distribution:
    \[
      \mu(\{g\})
      = \frac1{|G|}
      \quad\forall\,g\in G,
      \qquad
      \int_G f(g)\,d\mu(g)
      = \frac1{|G|}\sum_{g\in G}f(g).
    \]
    In this discrete (hence compact) setting, Haar integration is exactly averaging over group elements.
\end{remark}

\subsection{Projection of non-equivariant functions}

Given Hilbert spaces $V$ and $W$, the set of functions $\{f: V \to W\}$ form a vector space under pointwise scalar multiplication and addition. Given group representations $\rho$ and $\sigma$ of a group $G$ over $V$ and $W$, respectively, we can consider the subspace of functions that are equivariant with respect to these representations. The following lemma and corollary show what the projection onto this subspace looks like.

\begin{lemma}
    Let \( V \) and \( W \) be finite-dimensional vector spaces with an action of a finite group \( G \) with representations \( \rho: G \to \mathrm{GL}(V) \) and \( \sigma: G \to \mathrm{GL}(W) \) respectively. For any linear map \( T: V \to W \), define the averaged map by
    \[
    T_{\mathrm{avg}}(v) = \int_{G} \sigma(g)^{-1} T\bigl( \rho(g) v \bigr) \operatorname{d}\mu(g).
    \]
    Then \( T_{\mathrm{avg}} \) is \( G \)-equivariant, and moreover, $T$ is \( G \)-equivariant if and only if $T = T_{\mathrm{avg}}$.
\end{lemma}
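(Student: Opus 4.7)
The lemma has two parts: (i) $T_{\mathrm{avg}}$ is $G$-equivariant, and (ii) $T = T_{\mathrm{avg}}$ if and only if $T$ is $G$-equivariant. My plan is to handle (i) by a direct change of variable in the Haar integral, and then use (i) together with the normalisation $\mu(G)=1$ to obtain both directions of (ii).

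For part (i), the goal is to show $T_{\mathrm{avg}}(\rho(h)v) = \sigma(h)\,T_{\mathrm{avg}}(v)$ for every $h \in G$, $v \in V$. Substituting into the definition and using that $\rho$ is a homomorphism gives
\[
T_{\mathrm{avg}}(\rho(h)v) \;=\; \int_G \sigma(g)^{-1}\, T\bigl(\rho(gh)\,v\bigr)\,d\mu(g).
\]
I would then perform the change of variable $g' = gh$, invoking the right-invariance of the Haar measure so that $d\mu(g) = d\mu(g')$. Writing $g = g'h^{-1}$ and using that $\sigma$ is a homomorphism yields $\sigma(g'h^{-1})^{-1} = \sigma(h)\,\sigma(g')^{-1}$, so $\sigma(h)$ factors out of the integral and what remains is precisely $T_{\mathrm{avg}}(v)$. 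This step is the main technical point of the proof; everything else is direct.

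For part (ii), one direction is immediate: if $T = T_{\mathrm{avg}}$, then $T$ inherits equivariance from (i). For the converse, assume $T$ is equivariant, so $T(\rho(g)v) = \sigma(g)T(v)$. Substituting into the definition of $T_{\mathrm{avg}}$ collapses the integrand to $\sigma(g)^{-1}\sigma(g)T(v) = T(v)$, which is independent of $g$, and the normalisation $\mu(G)=1$ (as guaranteed by the definition of Haar measure above) gives $T_{\mathrm{avg}}(v) = T(v)$.

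The only genuine obstacle is being careful with the change-of-variable step in (i), in particular correctly identifying which translation invariance of $\mu$ is needed (right-invariance here) and correctly inverting the representation on the shifted argument. In the finite-group case one can replace the integral by a normalised sum and this reduces to a relabelling of summation index, which is essentially trivial; stating it in the Haar-measure framework keeps the argument uniform across the continuous and discrete cases.
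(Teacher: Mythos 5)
Your proposal is correct and follows essentially the same route as the paper: the same change of variable $g' = gh$ using right-invariance of the Haar measure together with $\sigma(g'h^{-1})^{-1} = \sigma(h)\sigma(g')^{-1}$ for equivariance of $T_{\mathrm{avg}}$, and the same collapse of the integrand plus $\mu(G)=1$ for the converse. The only (harmless) difference is that you spell out the trivial direction of the "if and only if" explicitly, which the paper leaves implicit.
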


\begin{proof}
    To show that \( T_{\mathrm{avg}} \) is \( G \)-equivariant, take any \( h \in G \) and \( v \in V \). Then
    \begin{align*}
    T_{\mathrm{avg}}(\rho(h) v) &= \int_{G} \sigma(g)^{-1} T\bigl( \rho(g) \rho(h) v \bigr) \operatorname{d}\mu(g) \\
    &=  \int_{G} \sigma(g)^{-1} T\bigl( \rho(gh) v \bigr) \operatorname{d}\mu(g).
    \end{align*}
    By substituting \( g' = gh \) (so that \( g = g' h^{-1} \)), we obtain
    \begin{align*}
    T_{\mathrm{avg}}(\rho(h) v) &=  \int_{G} \sigma(g' h^{-1})^{-1} T\bigl( \rho(g') v \bigr) \operatorname{d}\mu(g')\\
    &=  \int_{G} \sigma(h) \sigma(g')^{-1} T\bigl( \rho(g') v \bigr) \operatorname{d}\mu(g') \\
    &= \sigma(h) T_{\mathrm{avg}}(v),
    \end{align*}
    where we used \( \sigma(g' h^{-1})^{-1} = \sigma(h) \sigma(g')^{-1} \) because \( \sigma \) is a homomorphism. Thus, \( T_{\mathrm{avg}} \) is \( G \)-equivariant.
    
    Next, if \( T \) is already \( G \)-equivariant, then for every \( g \in G \),
    \[
    \sigma(g)^{-1} T(\rho(g) v) = T(v),
    \]
    so that
    \[
    T_{\mathrm{avg}}(v) = \int_{G}  T(v) = T(v).
    \]
\end{proof}

\begin{corollary}\label{cor:projection_group_average}
    In the setting of the previous lemma, the map \( ()_{\mathrm{avg}}: \mathrm{Hom}(V, W) \to \mathrm{Hom}(V, W) \) defined by \( T \mapsto  T_{\mathrm{avg}} \) is the projection map onto the subspace of \( G \)-equivariant linear maps. 
\end{corollary}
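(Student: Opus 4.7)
The plan is to verify three properties of the map $()_{\mathrm{avg}}$ that together identify it as the projection onto $\mathrm{Hom}_G(V,W)$, the subspace of $G$-equivariant linear maps: linearity, correct image, and idempotence.

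First, I would establish linearity of $T \mapsto T_{\mathrm{avg}}$. For each fixed $g \in G$, the assignment $T \mapsto \sigma(g)^{-1} \circ T \circ \rho(g)$ is linear in $T$, and since Haar integration is linear, scalar combinations pass through the integral defining $T_{\mathrm{avg}}$. This yields $(aT + bS)_{\mathrm{avg}} = a\, T_{\mathrm{avg}} + b\, S_{\mathrm{avg}}$, so $()_{\mathrm{avg}}$ is an element of $\mathrm{End}(\mathrm{Hom}(V,W))$.

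Second, by the first assertion of the preceding lemma, $T_{\mathrm{avg}}$ is $G$-equivariant for every $T \in \mathrm{Hom}(V,W)$, so the image of $()_{\mathrm{avg}}$ is contained in $\mathrm{Hom}_G(V,W)$; the reverse inclusion follows from the biconditional in the lemma applied with $T \in \mathrm{Hom}_G(V,W)$, which gives $T = T_{\mathrm{avg}}$ and hence places $T$ in the image. Third, I would verify idempotence: since $T_{\mathrm{avg}}$ is itself equivariant, the same biconditional immediately yields $(T_{\mathrm{avg}})_{\mathrm{avg}} = T_{\mathrm{avg}}$. Linearity together with idempotence, combined with the identification of the image, then completes the characterisation of $()_{\mathrm{avg}}$ as a projection onto $\mathrm{Hom}_G(V,W)$.

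No substantial obstacle is expected; the corollary follows mechanically by assembling the two halves of the preceding lemma with the trivial linearity of integration. If one additionally wished to conclude that $()_{\mathrm{avg}}$ is the \emph{orthogonal} projection with respect to the canonical $G$-invariant inner product on $\mathrm{Hom}(V,W)$ (and thereby line up with the Projection Theorem used elsewhere in the paper), the only extra ingredient would be to check that maps annihilated by $()_{\mathrm{avg}}$ are orthogonal to $\mathrm{Hom}_G(V,W)$ under this inner product, which is a direct consequence of the $G$-invariance of the inner product together with a change-of-variables in the Haar integral.
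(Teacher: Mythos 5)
Your proposal is correct and follows essentially the same route as the paper: both use the two halves of the preceding lemma to show the image of $()_{\mathrm{avg}}$ lies in the equivariant subspace and that the map acts as the identity there, hence is idempotent with the right image. Your explicit check of linearity and the remark on orthogonality with respect to a $G$-invariant inner product are minor (welcome) additions the paper leaves implicit.
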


\begin{proof}
    In the previous proof we saw that \( ()_{\mathrm{avg}} \) acts as the identity on the subspace of \( G \)-equivariant linear maps. Since applying \( ()_{\mathrm{avg}} \) to any linear map \( T \) yields a \( G \)-equivariant map \( T_{\mathrm{avg}} \), \( ()_{\mathrm{avg}} \) is a projection onto the subspace of \( G \)-equivariant linear maps.
\end{proof}

Next, we study what form the projection of the composition of a linear equivariant function with another function takes.

\begin{proposition}\label{prop:composite_projection}
    Let $V, X, W$ be finite-dimensional vector spaces with an action of a finite group $G$ given by representations $\rho_V: G \to \mathrm{GL}(V), \rho_X: G \to \mathrm{GL}(X)$ and $\rho_W: G \to \mathrm{GL}(W)$ respectively. Denote the projection of $\mathrm{Hom}(V, W)$ onto the space of of $G$-equivariant functions with respect to the representations $\rho_V$ and $\rho_W$ by $P_{V, W}$ and similar for the other combinations. Now suppose we have functions $T: V \to W, R: V \to X, S: X \to W$ such that 
    \begin{itemize}
        \item $T = S \circ R$,
        \item $S$ is $G$-equivariant and linear.
    \end{itemize}
    We then have $P_{V, W}(T) = S \circ P_{V,X}(R)$. 
\end{proposition}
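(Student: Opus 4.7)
The plan is to apply the explicit averaging formula for the projection derived in Corollary~\ref{cor:projection_group_average}, and then push the linear equivariant map $S$ through the Haar integral using its equivariance to swap the action on $W$ for the action on $X$. Concretely, for any $v \in V$, I would begin from
\[
  P_{V,W}(T)(v) \;=\; \int_G \rho_W(g)^{-1}\, T(\rho_V(g)\,v)\, \mathrm{d}\mu(g),
\]
and substitute $T = S \circ R$ inside the integrand to obtain $\rho_W(g)^{-1}\, S\bigl(R(\rho_V(g)\,v)\bigr)$.

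Next, I would invoke the hypothesis that $S$ is $G$-equivariant with respect to $\rho_X$ and $\rho_W$. Equivariance gives $S \circ \rho_X(g) = \rho_W(g) \circ S$ for every $g$, and taking inverses on both sides yields $\rho_W(g)^{-1} \circ S = S \circ \rho_X(g)^{-1}$. Applying this identity inside the integrand rewrites the integrand as $S\bigl(\rho_X(g)^{-1}\, R(\rho_V(g)\,v)\bigr)$.

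Finally, since $S$ is linear and the integrand takes values in the finite-dimensional space $W$, the Haar integral commutes with $S$, so
\[
  P_{V,W}(T)(v) \;=\; S\!\left(\int_G \rho_X(g)^{-1}\, R(\rho_V(g)\,v)\, \mathrm{d}\mu(g)\right) \;=\; S\bigl(P_{V,X}(R)(v)\bigr),
\]
where the last equality recognises the interior integral as the averaging formula defining $P_{V,X}(R)$. Since this holds for every $v \in V$, we conclude $P_{V,W}(T) = S \circ P_{V,X}(R)$.

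The only mildly delicate step is the commutation of $S$ with the integral: one must justify pulling a linear map through a Haar integral of vector-valued functions. In the finite-group (or more generally, finite-dimensional, compact-group) setting this is immediate because the integral reduces to a finite average or a finite linear combination of vector values, on which linearity acts componentwise; no measure-theoretic subtleties arise. All other steps are purely algebraic manipulations using the group-homomorphism property of $\rho_W$ and the equivariance of $S$.
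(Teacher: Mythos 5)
Your proof is correct and follows essentially the same route as the paper's: expand $P_{V,W}(T)$ via the group-averaging formula of Corollary~\ref{cor:projection_group_average}, use the equivariance identity $\rho_W(g)^{-1}\circ S = S\circ\rho_X(g)^{-1}$, and pull $S$ out of the integral by linearity to recover $S\circ P_{V,X}(R)$. If anything, your write-up is slightly cleaner, since the paper's displayed chain contains a typo ($\rho_W(g)^{-1}$ where $\rho_X(g)^{-1}$ is meant) that you avoid.
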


\begin{proof}
    Since $P_{V, W}$ projects onto an equivariant subspace, by the above we know that
    \[
    (P_{V, W}T)(v) = \int_G \rho_W(g)^{-1} T( \rho_V(g)v) \, d\mu(g)
    \]
    for all $v \in V$. Now by definition and the linearity of $S$,
    \begin{align*}
        (P_{V, W}T)(v) 
        &= \int_G \rho_W(g)^{-1} (S \circ R)(\rho_V(g)v) \, d\mu(g) \\
        &= \int_G \rho_W(g)^{-1} S(R(\rho_V(g)v)) \, d\mu(g) \\
        &= \int_G S( \rho_X(g)^{-1} R(\rho_V(g)v)) \, d\mu(g) \quad \quad \text{(equivariance of $S$)}\\
        &= S \left(\int_G \rho_W(g)^{-1} R(\rho_V(g)v)\, d\mu(g) \right) \quad \quad \text{(linearity of $S$)} \\
        &= S \left( (P_{V, X} R)(v) \right) 
    \end{align*}
    since $S\left( \rho_V(g)^{-1} R(\rho_V(g) v) \right) = \rho_W(g)^{-1} S(R(\rho_V(g) v))$ by the equivariance of $S$. This concludes the proof.
\end{proof}

\subsection{Equivariance in Neural ODEs}

\begin{definition}
    Let \( f: X \times I \to X \) be a smooth function, where \(I\subset\mathbb{R}\) is an interval containing \(0\) and $X$ is some Banach space. For each \( x \in X \), consider the initial value problem
    \begin{align}\label{eqn:flow}
        \frac{dz}{dt} = f(z,t), \quad z(0)=x.
    \end{align}
    The \emph{flow} of the ODE is the mapping $\Phi: I \times X \to X, \quad (t,x) \mapsto \Phi_t(x)$, which assigns to each pair $(t, x)$ the solution to the IVP in equation \ref{eqn:flow} with initial condition $x$ evaluated at time $t$.
\end{definition}

\begin{proposition}\label{thm:proj_flow}
Let $\Phi_t(x)$ be the flow of an ODE with vector field $f: X \times I \to X$ and let $P$ be the projection operator of the space of functions $\{g: X \to X\}$ onto the space $G$-equivariant maps with respect to a group representation $\rho$ of $G$ on $X$. Then the projected flow \(P(\Phi_t)\) and the flow $\widetilde{\Phi}_t$ of the projected vector field $P(f): X \times I \to X$ agree for all times $t \in I$, i.e. $P(\Phi_t) = \widetilde{\Phi}_t$.
\end{proposition}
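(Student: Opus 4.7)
My plan is to show that $P(\Phi_t)$ and $\widetilde{\Phi}_t$ satisfy the same initial value problem on $X$ and then invoke Picard--Lindel\"of uniqueness to conclude $P(\Phi_t) = \widetilde{\Phi}_t$ for every $t \in I$. Concretely, I will verify that $z(t) := P(\Phi_t)$ satisfies $\dot{z}(t) = P(f)(z(t), t)$ with $z(0) = \mathrm{id}_X$. The initial condition is immediate: $\Phi_0 = \mathrm{id}_X$ is $G$-equivariant because $\rho$ is a group homomorphism, so Corollary~\ref{cor:projection_group_average} gives $P(\Phi_0) = \mathrm{id}_X = \widetilde{\Phi}_0$.

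For the derivative, I would use the explicit group-averaging form of $P$ from Corollary~\ref{cor:projection_group_average},
\[
P(\Phi_t)(x) \;=\; \int_G \rho(g)^{-1} \, \Phi_t\bigl(\rho(g) x\bigr) \, d\mu(g),
\]
and differentiate under the integral sign, which is justified by smoothness of $f$ together with compactness of $G$ (yielding a uniform integrable majorant). Combining this with the defining ODE for $\Phi_t$ produces
\[
\tfrac{d}{dt} P(\Phi_t)(x) \;=\; \int_G \rho(g)^{-1} \, f\bigl(\Phi_t(\rho(g) x), t\bigr) \, d\mu(g).
\]

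The crux of the argument, and the step I expect to be the main obstacle, is identifying this right-hand side with $P(f)\bigl(P(\Phi_t)(x), t\bigr) = \int_G \rho(h)^{-1} f\bigl(\rho(h) P(\Phi_t)(x), t\bigr)\, d\mu(h)$. Because $P(\Phi_t)$ lies in the image of the projection and is therefore $G$-equivariant, one has $\rho(h) P(\Phi_t)(x) = P(\Phi_t)(\rho(h) x)$, so the left-invariance of the Haar measure lets us relabel the outer integral. What remains is to interchange $f$ with the inner averaging over $g$: in the linear regime that underpins the application of this proposition in Theorem~\ref{thm:optimality}, the interchange is automatic because integration commutes with linear maps, and the two expressions coincide directly. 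In that regime, once this identification is in place, Picard--Lindel\"of applied to $P(f)$, which inherits Lipschitzness from $f$, delivers $P(\Phi_t) = \widetilde{\Phi}_t$ on the full interval $I$. Handling a genuinely nonlinear $f$ would require a more delicate argument, as a naive swap of $f$ with the average is obstructed by Jensen-type inequalities.
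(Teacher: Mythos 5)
Your strategy is the same as the paper's: realise $P$ by Haar averaging (Corollary~\ref{cor:projection_group_average}), show that $P(\Phi_t)$ solves the same initial value problem as $\widetilde{\Phi}_t$, and conclude by uniqueness; the initial-condition part matches the paper exactly. The difference lies in the derivative step, and there your more careful computation exposes a genuine gap rather than closing it. You correctly obtain
\[
\tfrac{d}{dt}\,P(\Phi_t)(x)\;=\;\int_G \rho(g)^{-1} f\bigl(\Phi_t(\rho(g)x),t\bigr)\,d\mu(g),
\]
and you correctly identify the crux as matching this with $P(f)\bigl(P(\Phi_t)(x),t\bigr)$. However, your proposed resolution---that in the linear regime the interchange is ``automatic because integration commutes with linear maps''---does not work. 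The obstruction is not nonlinearity of $f$ but the fact that the Haar average of the composite $g\mapsto\rho(g)^{-1}\,f(\cdot,t)\,\Phi_t\,\rho(g)$ is not the composite of the averages when both factors fail to be equivariant (average of a product is not the product of averages). Concretely, let $G=\mathbb{Z}_2$ act on $\mathbb{R}^2$ by swapping coordinates and take $f(x,t)=\mathrm{diag}(1,0)\,x$: then $P(\Phi_t)(x)=\tfrac{e^{t}+1}{2}\,x$, while $P(f)(x,t)=\tfrac12 x$ has flow $\widetilde{\Phi}_t(x)=e^{t/2}x$, and by AM--GM these differ for every $t>0$. So the step you flag cannot be closed at this level of generality, linear or not, and your closing Picard--Lindel\"of appeal has nothing to apply to.

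For comparison, the paper's own proof glosses over exactly this point: after differentiating under the integral it writes $\tfrac{d}{dt}\Phi_t(\rho(g)x)=f(\rho(g)x,t)$, i.e.\ it drops $\Phi_t$ inside $f$ (valid only at $t=0$), and it then compares with $P(f)(x,t)$ frozen at the initial point rather than with $P(f)$ evaluated along the candidate solution. Your hesitation is therefore well placed: the identity $P(\Phi_t)=\widetilde{\Phi}_t$ needs additional hypotheses beyond those stated (it is trivial if $f(\cdot,t)$ is already equivariant, and the intended application in Theorem~\ref{thm:optimality} is better served by arguing directly through the composite structure of Proposition~\ref{prop:composite_projection}, where only the fusion acting on the control is non-equivariant, rather than through this flow-level claim). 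In short: same approach as the paper, honest about the key step, but that step is not established---and, as the example shows, cannot be established as stated.
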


\begin{proof}
    First, we note that by Corollary \ref{cor:projection_group_average}, $P$ can be realised by
    \begin{align*}
       P(f)(x, t) &= \int_G \rho(g)^{-1} f(\rho(g)x, t) \operatorname{d}\mu(g), \\
       P(\Phi_t)(x) &= \int_G \rho(g)^{-1} \Phi_t(\rho(g)x) \operatorname{d}\mu(g).
    \end{align*}
    Now, by the uniqueness theory of ODEs (see for example Theorem $6$ in \cite{LobanovPicard}), it suffices to show that $P(\Phi_t)$ satisfies the same initial condition and ODE as $\widetilde{\Phi}_t$. To that end, we calculate 
    \begin{align*}
        \frac{d}{dt}\bigl(P\Phi_t\bigr)(x) 
        &= \int_G \rho(g)^{-1} \frac{d}{dt} \Phi_t(\rho (g)x) \operatorname{d}\mu(g) \\
        &= \int_G \rho(g)^{-1} f(\rho(g)x, t) \operatorname{d}\mu(g) \\
        &= P(f)(x, t) 
    \end{align*}
    and
    \begin{align*}
        \bigl(P\Phi_0\bigr)(x) 
        &= \int_G \rho(g)^{-1} \Phi_0(\rho (g)x) \operatorname{d}\mu(g) \\
        &= \int_G \rho(g)^{-1} \rho(g)x \operatorname{d}\mu(g) \\
        &= x.
    \end{align*}
    This finishes the proof.
\end{proof}

This theorem shows that, under the stated conditions, if one projects the full flow \(\Phi_t\) using \(P\), then this projected flow is exactly the same as the flow obtained by integrating the projected vector field \(P(f)\).

\section{Application to Graph Neural CDE}\label{sec:application_to_gncdes}


\begin{definition}[Representation of the Permutation Group]
    We define a \emph{representation} of the permutation group \( S_n \) on the space of node features \( \mathbb{R}^{n \times d} \) for each \(\sigma \in S_n\) by $\rho(\sigma)(\textbf{X}) = \textbf{P}_\sigma \textbf{X}$ where $\textbf{P}_\sigma$ is the permutation matrix associated with \(\sigma\). Similarly, we define a representation on the space of node features \( \mathbb{R}^{n \times n} \) by $\pi(\sigma)(\textbf{X}) = \textbf{P}_\sigma X  \textbf{P}_\sigma^T$.
\end{definition}

The pre-multiplication by $\textbf{P}_\sigma$ permutes the rows of a matrix, and the post-multiplication permutes the columns.

\begin{proof}[Proof of Theorem \ref{thm:optimality}]
    Firstly, by Proposition \ref{thm:proj_flow}, the projection of any Graph Neural ODE onto the subspace of permutation equivariant functions is the Neural ODE where the vector field is the projection of the Graph Neural ODE. Hence, it suffices to consider the projection of the integrand in Equation \ref{eqn:gn-cde_approx}.

    We can decompose this integrand as the composition of the following two functions:
    \begin{align*}
        R(\textbf{Z}, \textbf{A}, \frac{\operatorname{d}\textbf{A}}{\operatorname{d}s}) = \left(\textbf{Z}, \textbf{W}^{(DR)} \begin{bmatrix}
            \mathbf{A} \\
            \frac{\text{d} \mathbf{A}}{\text{d}s} 
        \end{bmatrix} \right ) \\
        S(\textbf{Z}, \Tilde{\mathbf{A}}) = \Tilde{\mathbf{A}} \mathbf{Z}^{(L)} \textbf{W}^{(L)}
    \end{align*}
    where $\mathbf{Z}^{(L)}$ are latent representations obtained by an iterative convolutional process of the form $\mathbf{Z}^{(l)} =  \Tilde{\mathbf{A}} \mathbf{Z}^{(l-1)} \mathbf{W}^{(l-1)}$ for $l \in \{1, \dots, L\}$ where the $\textbf{W}^{(l)}$ are learnable matrices.

    The function $S$ is a standard convolutional GNN and hence both permutation equivariant and linear in $\textbf{Z}$. Hence, we can apply Proposition \ref{prop:composite_projection} and realise we only need to consider the projection of $R$.

    But since $W^{(DR)}$ is learnable and in Section \ref{sec:char_lin_layers} we characterised the space of linear equivariant functions on the space of adjacency matrices, instead of projecting onto the space of equivariant functions, we may directly parameterise an element of this space as a linear combination of the basis elements. This concludes the proof.
\end{proof}

\section{Proofs}\label{sec:proofs}

\begin{proof}[Proof the GN-CDE (Equation \ref{eqn:gn-cde}) is not permutation equivariant]
    For notational simplicity, we flatten \(f_\theta(\textbf{Z}_s,\textbf{A}_s)\) and the control \(\text{d} \hat{\mathbf{A}}_s\) into tensors of ranks $2$ and $1$, respectively:
    \begin{align}
        F_\theta(\mathbf Z_s,\mathbf A_s)\;&=\;\mathrm{vec}\bigl[f_\theta(\mathbf Z_s,\mathbf A_s)\bigr]\;\in\;\mathbb{R}^{n d_z\times 2n^2}, \\
        \mathbf B\;&=\;\mathrm{vec}\bigl[\text{d}\hat{\mathbf A}_s\bigr]\;\in\;\mathbb{R}^{2n^2}\,.
    \end{align}
    
    Under a node permutation \(\sigma\in S_n\), with permutation matrix \(\textbf{P}_\sigma\), these transform as
    \begin{align}
        F_\theta(\mathbf Z_s,\mathbf A_s)\;&\mapsto\;(\textbf{P}_\sigma\otimes \textbf{I}_{d_z})\,F_\theta(\mathbf Z_s,\mathbf A_s), \\
        \mathbf B\;&\mapsto\;(\textbf{P}_\sigma\otimes \textbf{P}_\sigma\otimes \textbf{I}_2)\,\mathbf B.
    \end{align}
    
    Requiring that the contraction
    \[
    T_\theta(\mathbf Z_s,\mathbf A_s)\;=\;F_\theta(\mathbf Z_s,\mathbf A_s)\,\mathbf B
    \]
    be permutation-equivariant then gives
    \[
    T_\theta\bigl(\textbf{P}_\sigma\,\mathbf Z_s,\;\textbf{P}_\sigma\,\mathbf A_s\,\textbf{P}_\sigma^\top\bigr)
    =\;(\textbf{P}_\sigma\otimes \textbf{I}_{d_z})\;T_\theta(\mathbf Z_s,\mathbf A_s)\;(\textbf{P}_\sigma^\top\otimes \textbf{P}_\sigma^\top\otimes \textbf{I}_2)
    \quad\forall\,\sigma\in S_n.
    \]
    
    No general architecture for \(f_\theta\) (for example, a standard multilayer perceptron) can satisfy this constraint unless it is explicitly parametrised to output elements in the space of permutation-equivariant linear maps.
\end{proof}

\begin{proof}[Proof the GN-CDE fusion operation (Equation \ref{eqn:gn-cde_approx}) is not permutation equivariant]
    Firstly we note that by writing $\mathbf{W}^{(F)} = \begin{bmatrix} \mathbf{W}^{(F)}_1 \, \, \mathbf{W}^{(F)}_2\end{bmatrix}$ for $\mathbf{W}^{(F)}_1,$ $\mathbf{W}^{(F)}_2 \in \mathbb{R}^{n \times n}$, the fusion takes the form 
    \begin{align}
        \Tilde{\mathbf{A}}_s =  \mathbf{W}^{(F)}
        \begin{bmatrix}
            \mathbf{A}_s \\
            \frac{\text{d} \mathbf{A}_s}{\text{d}s}
        \end{bmatrix} 
        = \begin{bmatrix} 
            \mathbf{W}^{(F)}_1 \, \, \mathbf{W}^{(F)}_2
        \end{bmatrix} 
        \begin{bmatrix}
            \mathbf{A}_s \\
            \frac{\text{d} \mathbf{A}_s}{\text{d}s} 
        \end{bmatrix} 
        = \mathbf{W}^{(F)}_1 \mathbf{A}_s + \mathbf{W}^{(F)}_2  \frac{\text{d} \mathbf{A}_s}{\text{d}s}.
    \end{align}
    Thus, the fusion operation can be viewed as pre-multiplying each of $\mathbf{A}_s$ and $\frac{\text{d} \mathbf{A}_s}{\text{d}s}$ by a learnable weight matrix and then summing the result. However, letting $\textbf{P}$ be an arbitrary permutation matrix, we have 
    \begin{subequations}
        \begin{align}
            \textbf{P} \Tilde{\mathbf{A}}_s \textbf{P}^T 
            &= \textbf{P} \mathbf{W}^{(F)}_1 \mathbf{A}_s \textbf{P}^T + \textbf{P}  \mathbf{W}^{(F)}_2  \frac{\text{d} \mathbf{A}_s}{\text{d}s} \textbf{P}^T \\
            &\neq \mathbf{W}^{(F)}_1 \textbf{P} \mathbf{A}_s \textbf{P}^T + \mathbf{W}^{(F)}_1 \textbf{P} \frac{\text{d} \mathbf{A}_s}{\text{d}s} \textbf{P}^T
        \end{align}
    \end{subequations}
    in general, since $ \mathbf{W}^{(F)}_1$ and $ \mathbf{W}^{(F)}_2$ do not necessarily commute with $\textbf{P}$.
\end{proof}

\begin{proof}[Proof of Proposition \ref{prop:equiv}]
    First we show permutation equivariance which follows immediately from the construction of the linear fusion. For let $\textbf{P}$ be an arbitrary permutation matrix. Then 
    \begin{align}
        L_1(\textbf{P} \mathbf{A}_s \textbf{P}^T) + L_2 \left( \textbf{P} \frac{\text{d} \mathbf{A}_s}{\text{d}s} \textbf{P}^T \right)
        = \textbf{P} L_1(\mathbf{A}_s) \textbf{P}^T + \textbf{P} L_2 \left( \frac{\text{d} \mathbf{A}_s}{\text{d}s} \right) \textbf{P}^T
        = \textbf{P} \Bar{\mathbf{A}}_s \textbf{P}^T.
    \end{align}
    Since $\zeta_\theta$ is implemented as a GNN, $\zeta_\theta(\textbf{P} \textbf{X}_{t_0}, \textbf{P} \mathbf{A}_{t_0} \textbf{P}^T) =  \textbf{P} \zeta_\theta(\textbf{X}_{t_0}, \mathbf{A}_{t_0}) = \textbf{P} \mathbf{Z}_{t_0}$ and so overall for all $t \in ( t_0, t_N ]$ we have 
    \begin{subequations}
        \begin{align}
            \textbf{P} \mathbf{Z}_{t_0} + \int_{t_0}^t \sigma ( \textbf{P} \Bar{\mathbf{A}}_s \textbf{P}^T \textbf{P} \mathbf{Z}^{(L)}_s \mathbf{W}^{(L)}) \text{d}s 
            &= \textbf{P} \mathbf{Z}_{t_0} + \int_{t_0}^t \textbf{P} \sigma ( \Bar{\mathbf{A}}_s \mathbf{Z}^{(L)}_s \mathbf{W}^{(L)}) \text{d}s \\
            &= \textbf{P} \mathbf{Z}_{t_0} + \textbf{P} \int_{t_0}^t  \sigma ( \Bar{\mathbf{A}}_s \mathbf{Z}^{(L)}_s \mathbf{W}^{(L)}) \text{d}s \\
            &= \textbf{P} \left( \mathbf{Z}_{t_0} +  \int_{t_0}^t  \sigma ( \Bar{\mathbf{A}}_s \mathbf{Z}^{(L)}_s \mathbf{W}^{(L)}) \text{d}s \right) \\
            &= \textbf{P} \textbf{Z}_t
        \end{align}
    \end{subequations}
    using the orthogonality of $\textbf{P}$ and linearity of the integral.

    To show time-warp equivariance, suppose a latent path $\textbf{Z}: [0, T] \rightarrow \mathbb{R}^{n \times d_z}$ satisfies
    \begin{align}
        \textbf{Z}(0) = \textbf{Z}_0, \quad \quad \frac{\text{d}\textbf{Z}}{\text{d}t}(t) = f_\theta(\textbf{Z}(t), \textbf{A}(t)) \frac{\text{d}\hat{\textbf{A}}}{\text{d}t}(t).
    \end{align}
    for some dynamic graph data $\textbf{A}: [0, T] \rightarrow \mathbb{R}^{n \times n}$. Then the warped path $\hat{\textbf{Z}} \coloneqq \textbf{Z} \circ \tau$ satisfies
    \begin{align}
        \hat{\textbf{Z}}(0) = (\textbf{Z} \circ \tau)(0) = \textbf{Z}(0) = \textbf{Z}_0.
    \end{align}
    and by a simple application of the chain rule we get
    \begin{subequations}
        \begin{align}
            \frac{\text{d}\hat{\textbf{Z}}}{\text{d}t}(t) = \frac{\text{d}}{\text{d}t}(\textbf{Z} \circ \tau)(t) &= \frac{\text{d}\textbf{Z}}{\text{d}t}(\tau(t)) \tau^\prime(t) \\
            &= f_\theta(\textbf{Z}(\tau(t)), \textbf{A}(\tau(t))) \frac{\text{d}\hat{\textbf{A}}}{\text{d}t}(\tau(t)) \tau^\prime(t) \\
            &= f_\theta(\hat{\textbf{Z}}(t), \textbf{A}(\tau(t))) \frac{\text{d}}{\text{d}t}(\hat{\textbf{A}} \circ \tau)(t)
        \end{align}
    \end{subequations}
    which is what we wanted.
\end{proof}

\begin{proof}[Proof of equivariance in Section \ref{sec:data}]
    First we show that for general $\textbf{A} \in \mathbb{R}^{n \times m \times d}$ and $\textbf{B} \in \mathbb{R}^{n \times d}$ we have $\textbf{P}(\textbf{A} \odot \textbf{B}) = \textbf{P}\textbf{A} \odot  \textbf{P} \textbf{B}$. In Einstein notation we have
    \begin{subequations}
        \begin{align}
            \left \{\textbf{P}(\textbf{A} \odot \textbf{B}) \right \}_{ij} 
            &= \textbf{P}_{ik}(\textbf{A} \odot \textbf{B})_{kj} \\
            &= \sum_k \textbf{P}_{ik}\textbf{A}_{kjl} \textbf{B}_{kl}.
        \end{align}
    \end{subequations}
    At the same time
    \begin{subequations}
        \begin{align}
            \left (\textbf{P}\textbf{A} \odot \textbf{P} \textbf{B} \right )_{ij} 
            &= (\textbf{P}\textbf{A})_{ijl} (\textbf{P}\textbf{B})_{il}\\
            &= \textbf{P}_{ik}\textbf{A}_{kjl}  \textbf{P}_{ik^\prime}\textbf{B}_{k^\prime l}  \\
            &= \sum_k \textbf{P}_{ik}\textbf{A}_{kjl}  \textbf{B}_{k l}  \\
            &= \left \{\textbf{P}(\textbf{A} \odot \textbf{B}) \right \}_{ij}
        \end{align}
    \end{subequations}
    because $\textbf{P}_{ik}, \textbf{P}_{ik^\prime} \in \{0, 1\}$, so the only terms contributing to the sum are those where $\textbf{P}_{ik} = \textbf{P}_{ik^\prime}$ which happens if and only if $k = k^\prime$ as $\textbf{P}$ is a permutation matrix.

    Hence, we see that the vector field in equation \ref{eqn:perm_gn-cde_data} is permutation equivariant. By the above, we can conclude that the entire model is equivariant.
\end{proof}

\section{Additional Experiments}\label{app:add_experiments}

Here, we present the results of the additional experiments on personal capital and opinion dynamics.
\begin{table}[H]
    \centering
    \small
    \renewcommand{\arraystretch}{1.1}
    \setlength{\tabcolsep}{6pt}
    \caption{Additional comparison of GN-CDE variants and baselines on the wealth (top) and opinion dynamics (bottom) tasks. Mean MSEs with $95\%$ confidence intervals are reported, with the best mean highlighted in \textbf{bold}, and all results within the corresponding confidence interval are \underline{underlined}. The final row in each table reports the relative improvement of PENG-CDE over the original GN-CDE formulation.}
    \vspace{1ex}
    \label{tab:combined_additional}
    \begin{adjustbox}{center}
        \begin{tabular}{lcccc}
            \toprule
            \multicolumn{5}{c}{\textbf{Wealth Dynamics} (\textbf{MSE $\downarrow$)}} \\
            \midrule
            \textbf{Model} & \textbf{Community} & \textbf{Grid} & \textbf{Power Law} & \textbf{Small World} \\
            \midrule
            DCRNN \cite{li2018dcrnn_traffic} & $2.639 \pm 1.981$ & $22.619 \pm 17.115$ & $113.564 \pm 236.126$ & $200.486 \pm 280.387$ \\
            STIDGCN \cite{Liu2024} & $1.862 \pm 0.947$ & $7.296 \pm 2.547$ & $2.124 \pm 1.012$ & $3.133 \pm 1.286$ \\
            ASTGCN \cite{Guo2019} & $3.431 \pm 1.224$ & $19.694 \pm 2.986$ & $5.153 \pm 1.184$ & $9.246 \pm 2.204$ \\
            \midrule
            STG-NCDE & $36.087 \pm 2.454$ & $44.677 \pm 3.755$ & $37.148 \pm 2.742$ & $41.046 \pm 2.205$ \\
            \midrule
            Constant & $2.504 \pm 0.473$ & $14.157 \pm 1.746$ & $3.682 \pm 0.873$ & $7.445 \pm 1.200$ \\
            Graph Neural ODE \cite{GNODEs} & $0.878 \pm 0.388$ & $\mathbf{1.898} \pm \mathbf{1.342}$ & $\underline{0.950 \pm 0.660}$ & $\underline{0.940 \pm 0.185}$ \\
            Adjacency GN-CDE & $0.904 \pm 0.493$ & $\underline{2.193 \pm 1.721}$ & $\underline{1.075 \pm 0.847}$ & $\underline{1.297 \pm 0.765}$ \\
            Pre Mult GN-CDE & $7.740 \pm 1.122$ & $24.522 \pm 6.594$ & $39.715 \pm 32.698$ & $16.568 \pm 6.902$ \\
            Original GN-CDE & $1.577 \pm 0.781$ & $6.582 \pm 3.273$ & $2.406 \pm 0.965$ & $3.076 \pm 1.193$ \\
            \hdashline
            PENG-CDE (ours) & $\mathbf{0.522} \pm \mathbf{0.316}$ & $4.273 \pm 3.818$ & $\mathbf{0.863} \pm \mathbf{0.330}$ & $\mathbf{0.813} \pm \mathbf{0.627}$ \\
            \midrule
            Relative Improvement (\%) & $66.89\%$ & $35.09\%$ & $64.13\%$ & $73.57\%$ \\
            \midrule \midrule
            \multicolumn{5}{c}{\textbf{Opinion Dynamics} (\textbf{MSE $\downarrow$)}} \\
            \midrule
            \textbf{Model} & \textbf{Community} & \textbf{Grid} & \textbf{Power Law} & \textbf{Small World} \\
            \midrule
            DCRNN \cite{li2018dcrnn_traffic} & $20.659 \pm 32.149$ & $92.165 \pm 132.026$ & $91.687 \pm 81.638$ & $173.554 \pm 183.913$ \\
            STIDGCN \cite{Liu2024} & $2.434 \pm 1.562$ & $6.977 \pm 2.451$ & $2.724 \pm 1.530$ & $3.933 \pm 1.567$ \\
            ASTGCN \cite{Guo2019} & $22.001 \pm 1.519$ & $34.855 \pm 2.370$ & $22.517 \pm 1.490$ & $27.439 \pm 1.554$ \\
            \midrule
            STG-NCDE & $57.495 \pm 9.245$ & $85.405 \pm 6.479$ & $59.446 \pm 9.294$ & $68.092 \pm 8.988$ \\
            \midrule
            Constant & $22.989 \pm 1.372$ & $33.124 \pm 1.275$ & $23.775 \pm 1.480$ & $27.239 \pm 1.929$ \\
            Graph Neural ODE \cite{GNODEs} & $\underline{0.665 \pm 0.620}$ & $\underline{2.338 \pm 1.277}$ & $3.073 \pm 6.114$ & $\mathbf{0.779} \pm \mathbf{0.417}$ \\
            Adjacency GN-CDE & $\underline{0.777 \pm 0.798}$ & $\underline{1.912 \pm 0.566}$ & $\underline{0.802 \pm 0.652}$ & $\underline{0.909 \pm 0.416}$ \\
            Pre Mult GN-CDE & $38.717 \pm 6.229$ & $64.825 \pm 14.066$ & $61.226 \pm 47.714$ & $88.503 \pm 74.608$ \\
            Original GN-CDE \cite{qin2023learning} & $\underline{1.060} \pm \underline{0.436}$ & $6.963 \pm 4.698$ & $6.781 \pm 11.159$ & $5.223 \pm 4.240$ \\
            \hdashline
            PENG-CDE (ours) & $\mathbf{0.525} \pm \mathbf{0.581}$ & $\mathbf{1.851} \pm \mathbf{1.488}$ & $\mathbf{0.674} \pm \mathbf{0.606}$ & $\underline{1.161} \pm \underline{6.011}$ \\
            \midrule
            Relative Improvement (\%) & $50.52\%$ & $73.41\%$ & $90.07\%$ & $77.77\%$ \\
            \bottomrule
        \end{tabular}
    \end{adjustbox}
\end{table}

\section{Implementation details}\label{sec:impl}

We will release all code alongside the camera-ready version of this paper. The implementation is written in the Python programming language \cite{PythonRossum}, and uses the JAX framework \cite{jax2018github}. Key dependencies include Diffrax \cite{kidger2022neural} for differentiable ODE solvers, Equinox \cite{kidger2021equinox} for neural network modules in JAX, Optax \cite{jax2018github} for optimisation, and Lineax \cite{lineax2023} for linear algebra routines. Additional dependencies include NumPy \cite{harris2020array}, Exca \cite{exca}, PyTorch \cite{rozemberczki2021pytorch}, and PyTorch Geometric \cite{Fey/Lenssen/2019}.

All differential equation-based models use the Tsitouras' 5/4 method \cite{tsitouras2011runge} as implemented in Diffrax.

\subsection{Compute resources}\label{sec:compute_resources}

All experiments were conducted on a computing cluster equipped with NVIDIA H100 (80 GB) and H200 (141 GB) GPUs. Each node in the cluster features 64-core AMD EPYC 9334 CPUs running at 3.90 GHz, along with 256 GB of RAM. The heat diffusion and gene regulation experiments described in Section~\ref{sec:exp_synth} required approximately 3.5 GPU days to complete, whereas the oversampling and irregularity experiments finished within a total of 12 GPU hours. The PyTorch Geometric Temporal experiments ran for approximately 8 hours. Running STG-NCDE, GN-CDE, and both versions of our model on the two Temporal Graph Benchmark tasks required around 8 GPU hours in total.

\subsection{Heat diffusion and gene regulation}

Two important examples of dynamical systems on networks are heat diffusion and gene regulation dynamics, both of which describe the local exchange of information or state between connected nodes. While heat diffusion models symmetric spreading (e.g., temperature equalisation), gene regulation introduces asymmetric, nonlinear interactions typical of biological systems. These processes can be modelled as follows:

\begin{alignat}{2}\label{eqn:heat_diff}
    \text{Heat Diffusion:} \quad \frac{\operatorname d \textbf{x}_u(t)}{\text{d}t} &= \mathbf{\mathcal{L}}_t \textbf{x}_u(t) = \sum \limits_{v \in \mathcal{N}_u(t)} \left ( \frac{\textbf{x}_u(t)}{\sqrt{d_u}} - \frac{\textbf{x}_v(t)}{\sqrt{d_v}} \right ) \\
    \text{Gene Regulation:} \quad \frac{\operatorname d \textbf{x}_u(t)}{\text{d}t} &= - \textbf{x}_u(t) f + \sum \limits_{v \in \mathcal{N}_u(t)} \frac{\textbf{x}_v(t)}{\textbf{x}_v(t) + 1}
\end{alignat}
where $\textbf{x}_u(t)$ is the temperature of node $u$ at time $t$, $\mathcal{N}_u(t)$ denotes its neighbourhood, and $\mathbf{\mathcal{L}}_t$ is the normalised graph Laplacian.

To extend our evaluation to economic networks and social interactions, we additionally include the following two dynamical systems, in which $\textbf{x}_u(t)$ models the personal capital or opinion of node $u$ at time $t$, respectively:

\begin{alignat}{2}\label{eqn:heat_diff}
    \text{Wealth Dynamics:} \quad \frac{\operatorname d \textbf{x}_u(t)}{\text{d}t} &= s_i \, \textbf{x}_u(t)^{0.6} + \sum \limits_{v \in \mathcal{N}_u(t)} (\textbf{x}_v(t) - \textbf{x}_v(t)) + \delta \textbf{x}_u(t)\\
    \text{Opinion Dynamics:} \quad \frac{\operatorname d \textbf{x}_u(t)}{\text{d}t} &= - \textbf{x}_u(t) + \textrm{threshold} \left( \sum \limits_{v \in \mathcal{N}_u(t)} \textbf{x}_v(t), 0.5\right).
\end{alignat}
Here, $\textit{threshold}$ denotes the thresholding function defined as $\textrm{threshold}(x, y) = 1$ iff $x \geq y$ and $0$ otherwise.

\subsection{Synthetic experiments}\label{sec:impl_synth}

To ensure a fair comparison, the vector fields of all differential equation-based models are implemented using a two-layer GCN \cite{Kipf2016SemiSupervisedCW} with a hidden dimension of $16$, and layer normalisation is applied. The models are trained for $2000$ epochs using the Adam optimiser \cite{KingmaBa2015} with a learning rate of $10^{-2}$ and weight decay of $10^{-4}$.

\subsubsection{SIR model}

The Susceptible-Infected-Recovered (SIR) model on graphs \cite{Kerr1976} is a network-based extension of the foundational compartmental model in epidemiology \cite{SIR1927} that describes how infectious diseases spread through a structured population. It partitions nodes in a graph into three groups: susceptible ($S_v$), infected ($I_v$), and recovered ($R_v$) for each node $v$, with transitions governed by the system
\begin{equation}\label{eqn:graphSIR}
\frac{dS_v}{dt} = -\beta S_v \sum_{u \in \mathcal{N}(v)} I_u, \quad \frac{dI_v}{dt} = \beta S_v \sum_{u \in \mathcal{N}(v)} I_u - \gamma I_v, \quad \frac{dR_v}{dt} = \gamma I_v,
\end{equation}
where $\beta$ is the transmission rate, $\gamma$ the recovery rate, and $\mathcal{N}(v)$ denotes the neighbours of node $v$ in the graph.

A key insight from the graph-based SIR model is that the epidemic threshold depends not only on the infection ratio $R = \beta/\gamma$, but also on the structure of the underlying graph, such as its spectral radius or degree distribution. If local connectivity supports sufficient transmission, the infection can spread widely; otherwise, it dies out. Despite its structural complexity, the graph SIR model preserves essential features of epidemic dynamics and extends the classical model to networked populations.

\subsubsection{Oversampling}

To study the dependence of performance and compute time on the number of observed samples, we construct grid graphs with $100$ nodes and generate irregularly sampled time series between $T = 0$ and $T = 1$, using $n=20$, $40$, $60$, $\dots$, $200$ observations respectively. The dynamic graph topology is generated as described in Section~\ref{sec:exp_synth}. Trajectories are produced by numerically solving Equation~\ref{eqn:graphSIR} with random initial conditions. We use two parameter settings: $(\beta_1 = 0.25, \gamma_1 = 0.7)$, modelling a scenario where the epidemic dies out, and $(\beta_2 = 0.3, \gamma_2 = 0.3)$, modelling a scenario where the infection spreads. For each setting, we generate $50$ trajectories each for training, validation, and test sets.

The task is to perform binary classification, predicting whether a given trajectory corresponds to an outbreak (i.e., sustained spread of infection) or a non-outbreak (i.e., rapid die-out).

We train the models using binary cross-entropy loss for $2000$ epochs, applying early stopping with a patience of $200$ epochs and a minimum of $200$ epochs. Training is performed using the Adam optimizer \cite{KingmaBa2015} with a learning rate of $10^{-2}$ and a weight decay of $10^{-4}$. The model hyperparameters are summarised in Table \ref{tab:hyperparam_sir}.

\begin{table}[h]
    \centering
    \small
    \renewcommand{\arraystretch}{1.1}
    \setlength{\tabcolsep}{8pt}
    \caption{Hyperparameters for SIR experiments.}
    \vspace{1ex}
    \begin{tabular}{lcc}
        \toprule
        & Perm Equiv GN-CDE & DCRNN \\
        \midrule
        \textbf{Hidden Dimension} & $32$ & $8$ \\
        \textbf{Number of Layers} & $3$ & $3$  \\
        \textbf{Data Embedding Dimension} & $3$ & $3$ \\
        \textbf{Layer type} & GCN & -- \\
        \textbf{Chebyshev Order} & -- & $3$ \\
        \bottomrule
    \end{tabular}
    \label{tab:hyperparam_sir}
\end{table}

\subsubsection{Irregular observation spacing}

To assess how the regularity of observation times affects model performance, we generate classification data exactly as before, except that we now sample the irregular observation time‐stamps from a Gamma‐driven process. In more detail, let \(N\) be the total number of desired time‐points on the interval \([0,T]\).  We begin by drawing \(N-1\) independent inter‐arrival increments
\[
\Delta t_i \;\sim\;\mathrm{Gamma}(k,\theta),
\quad i = 1,\dots,N-1,
\]
where \(k>0\) controls the shape (and hence burstiness) and \(\theta>0\) scales the raw increments.  To force the observations to span exactly \([0,T]\), we normalise the increments by their sum:
\[
\tilde\Delta t_i 
= \frac{\Delta t_i}{\sum_{j=1}^{N-1}\Delta t_j}\,T,
\qquad
t_i = \sum_{j=1}^{i}\tilde\Delta t_j,
\]
with \(t_0=0\) and \(t_N=T\).  This construction guarantees \(0=t_0 < t_1 < \cdots < t_N=T\) and yields exactly \(N+1\) strictly increasing time‐stamps.

In our experiments, we set \(T=1\) and draw \(N=20\) observations per trajectory using this Gamma‐based sampler for $k \in \{3, 5, 10, 25, , 50, 100\}$.  All other aspects of model training - network architecture, optimisation schedule, and hyperparameters - remain identical to those in the previous section.

\subsection{Real-world experiments}\label{sec:impl_real_world}

All models are trained over $200$ epochs with an early stopping criterion with a patience of $15$ and a minimum epoch of $20$. In all applications, we utilise cubic splines \cite[Chapter $3.5$]{kidger2022neural} for interpolation of the data. All hyperparameters have been selected from the ranges in Table \ref{tab:hyperparam_range} using a grid search. For the chosen values see Tables \ref{tab:hyperparam_gncde}, \ref{tab:perm_equiv_hyperparam_gncde} and \ref{tab:hyperparam_stgncde}. All experiments have been averaged over $10$ random seeds. 

\begin{table}[ht]
    \centering
    \small
    \renewcommand{\arraystretch}{1.1}
    \setlength{\tabcolsep}{8pt}
    \caption{Ranges the optimal hyperparameters (PENG) GN-CDE (top) and STG-NCDE (bottom).}
    \vspace{1ex}
    \begin{tabular}{lc}
        \toprule
        & \textbf{Range} \\
        \midrule
        \textbf{Hidden Dimension} & $\{8, 16, 32, 64\}$ \\
        \textbf{Number of Layers} & $\{2, 3, 4\}$ \\
        \textbf{Data Embedding Dimension} & $\{8, 16, 32, 64\}$ \\
        \midrule
        \textbf{Number of Recurrent Layers} & $\{1, 2, 3\}$ \\
        \textbf{Recurrent Hidden Dimension} & $\{8, 16, 32, 64\}$ \\
        \textbf{Number of GNN Layers} & $\{1, 2, 3\}$ \\
        \textbf{GNN Hidden Dimension} & $\{8, 16, 32, 64\}$ \\
        \textbf{Data Embedding Dimension} & $\{8, 16, 32, 64\}$ \\
        \textbf{Chebychev Order} & $\{2\}$ \\
        \textbf{Node Embedding Dimension} & $\{5\}$ \\
        \bottomrule
    \end{tabular}
    \label{tab:hyperparam_range}
\end{table}

\begin{table}
    \centering
    \small
    \renewcommand{\arraystretch}{1.1}
    \setlength{\tabcolsep}{8pt}
    \caption{Hyperparameters for real-world experiments for the GN-CDE model.}
    \vspace{1ex}
    \begin{tabular}{lcccc}
        \toprule
        & \textbf{tgbn-trade} & \textbf{tgbn-genre} & \textbf{twitter-tennis} & \textbf{england-covid} \\
        \midrule
        \textbf{Hidden Dimension} & $8$ & $8$ & $8$ & $32$ \\
        \textbf{Number of Layers} & $3$ & $2$ & $2$ & $4$ \\
        \textbf{Data Embedding Dimension} & $16$ & $8$ & - & - \\
        \textbf{Layer type} & GCN & GCN & GCN & GCN \\
        \bottomrule
    \end{tabular}
    \label{tab:hyperparam_gncde}
\end{table}

\begin{table}
    \centering
    \small
    \renewcommand{\arraystretch}{1.1}
    \setlength{\tabcolsep}{8pt}
    \caption{Hyperparameters for real-world experiments for the Permutation Equivariant GN-CDE model.}
    \vspace{1ex}
    \begin{tabular}{lcccc}
        \toprule
        & \textbf{tgbn-trade} & \textbf{tgbn-genre} & \textbf{twitter-tennis} & \textbf{england-covid} \\
        \midrule
        \textbf{Hidden Dimension} & $32$ & $16$ & $64$ & $64$ \\
        \textbf{Number of Layers} & $2$ & $3$ & $3$ & $3$ \\
        \textbf{Data Embedding Dimension} & $8$ & $8$ & - & - \\
        \textbf{Layer type} & GCN & GCN & GCN & GCN \\
        \bottomrule
    \end{tabular}
    \label{tab:perm_equiv_hyperparam_gncde}
\end{table}

\begin{table}
    \centering
    \small
    \renewcommand{\arraystretch}{1.1}
    \setlength{\tabcolsep}{8pt}
    \caption{Hyperparameters for real-world experiments for the STG-NCDE model.}
    \vspace{1ex}
    \begin{tabular}{lcccc}
        \toprule
        & \textbf{tgbn-trade} & \textbf{tgbn-genre} & \textbf{twitter-tennis} & \textbf{england-covid} \\
        \midrule
        \textbf{Number of Recurrent Layers} & $1$ & $3$ & $3$ & $1$ \\
        \textbf{Recurrent Hidden Dimension} & $8$ & $16$ & $32$ & $32$ \\
        \textbf{Number of GNN Layers} & $2$ & $1$ & $2$ & $3$ \\
        \textbf{GNN Hidden Dimension} & $8$ & $16$ & $32$ & $32$ \\
        \textbf{Data Embedding Dimension} & $32$ & $8$ & - & - \\
        \textbf{Chebychev Order} & $2$ & $2$ & $2$ & $2$ \\
        \textbf{Node Embedding Dimension}  & $5$ & $5$ & $5$ & $5$ \\
        \bottomrule
    \end{tabular}
    \label{tab:hyperparam_stgncde}
\end{table}

\subsection{Source target identification}\label{sec:sti}

In \cite{tjandra2024enhancingexpressivitytemporalgraph}, the authors address the well-known issue that machine-learned models underperform on node affinity prediction tasks compared to simple heuristics \cite{tgn_icml_grl2020} by modifying the message construction in TGN \cite{tgn_icml_grl2020} as follows: Given an interaction between nodes $u$ and $v$ at time $t$ with associated data $e_{uv}(t)$, the messages $\mathbf{m}_u(t)$ and $\mathbf{m}_v(t)$ sent to nodes $u$ and $v$, respectively, are defined as
\begin{align}
    \mathbf{m}_u(t) &= \textbf{msg}_s\big( \textbf{s}_u(t^{-}), \textbf{s}_v(t^{-}), \phi_t(\Delta t), e_{uv}(t), \phi_n(u), \phi_n(v) \big), \\
    \mathbf{m}_v(t) &= \textbf{msg}_d\big( \textbf{s}_v(t^{-}), \textbf{s}_u(t^{-}), \phi_t(\Delta t), e_{uv}(t), \phi_n(v), \phi_n(u) \big),
\end{align}
where $\Delta t$ is the time elapsed since their last interaction and $\phi: \mathbb{R} \rightarrow \mathbb{R}^n$ is an encoder function for node indices. In their work, the authors use $\phi(u) = (\cos(\omega_i u))_{i=0}^{n-1}$.

Inspired by this, we incorporate source-target identification into our equivariant fusion by transforming the adjacency matrix $\mathbf{A}_s$ and its derivative $\frac{\operatorname{d} \mathbf{A}_s}{\operatorname{d} s}$ using an attention-style approach. Specifically, each entry $a_{uv}$ of either matrix is passed through an MLP together with the encoding of both $u$ and $v$ to form a new matrix $\mathbf{B}$ with entries
\begin{equation}
    b_{uv} = \texttt{MLP}(a_{uv} || \phi(u) || \phi(v)).
\end{equation}  
where $||$ denotes concatenation. We selected the following hyperparameters: an embedding dimension of $n=512$, an MLP width of $8$, and $2$ hidden layers. We slightly deviated from the original approach by defining $\phi(u) = \texttt{MLP}(u)$ with a hidden dimension of $8$ and $2$ hidden layers.

\section{Statistical analysis of Experiments}\label{app:cd_diagrams}

In this section, we conduct a statistical verification of the relevance of our experiments. Note that in Section~\ref{sec:exp} whenever possible, we reported means along with $95\%$ confidence intervals. This applies to all synthetic experiments and the PyTorch Geometric Temporal datasets reported in Tables~\ref{tab:combined} and \ref{tab:pgt_erperiments} in the main text. For the Temporal Graph Benchmark datasets, baseline results were only available as mean NDGC@10 scores with standard deviations, so we adopted the same format for consistency.

To address statistical significance testing, we employ critical difference diagrams, based on a two-stage procedure:

\begin{enumerate}
    \item A global Friedman test was used to determine whether any model differences were statistically significant.
    \item If the null hypothesis was rejected, we performed pairwise comparisons using the aligned Friedman post-hoc test.
\end{enumerate}

We present the critical difference diagrams for the dynamical systems, Pytorch Geometric Temporal and Temporal Graph Benchmark datasets in Figures~\ref{fig:cd_dyn}, \ref{fig:cd_pgt} and \ref{fig:cd_tgb}, respectively. 

We can see that in all synthetic experiments (Figure \ref{fig:cd_dyn}), our PENG-CDE consistently ranks within the top non-significant clique, which demonstrates its strong and stable performance. For the PyTorch Geometric Temporal datasets (twitter-tennis, england-covid; Figure \ref{fig:cd_pgt}), the global Friedman test was inconclusive, meaning the test did not reject the null hypothesis that no model (even the baselines) is statistically different from the others. On the Temporal Graph Benchmark, the critical difference analysis (Figure \ref{fig:cd_tgb}) shows that PENG-CDE performs significantly better than both GN-CDE and STG-NCDE.

\begin{figure}[htp]
    \centering
    \begin{subfigure}{\textwidth}
        \includegraphics[width=\linewidth]{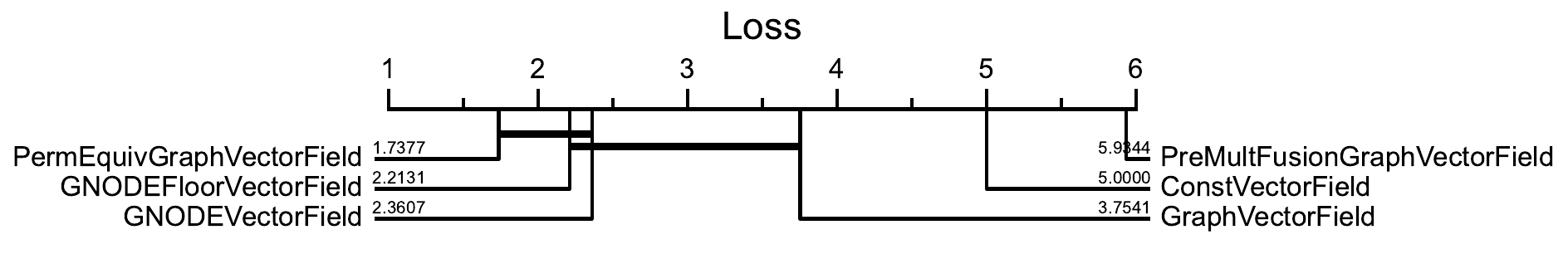}
        \caption{All dynamical systems.}
    \end{subfigure}
    
    \begin{subfigure}{\textwidth}
        \includegraphics[width=\linewidth]{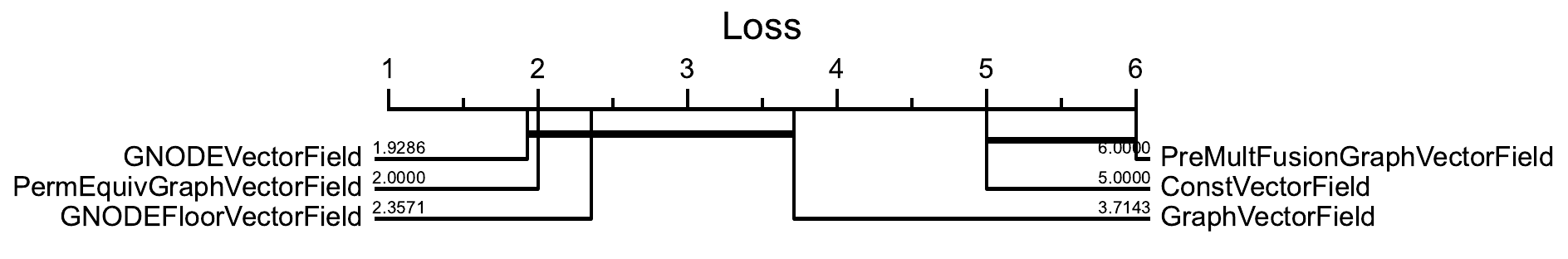}
        \caption{Heat diffusion dynamics.}
    \end{subfigure}

    \begin{subfigure}{\textwidth}
        \includegraphics[width=\linewidth]{images/cd_diagrams/dyn_heat_no_new.pdf}
        \caption{Gene regulation dynamics.}
    \end{subfigure}

    \begin{subfigure}{\textwidth}
        \includegraphics[width=\linewidth]{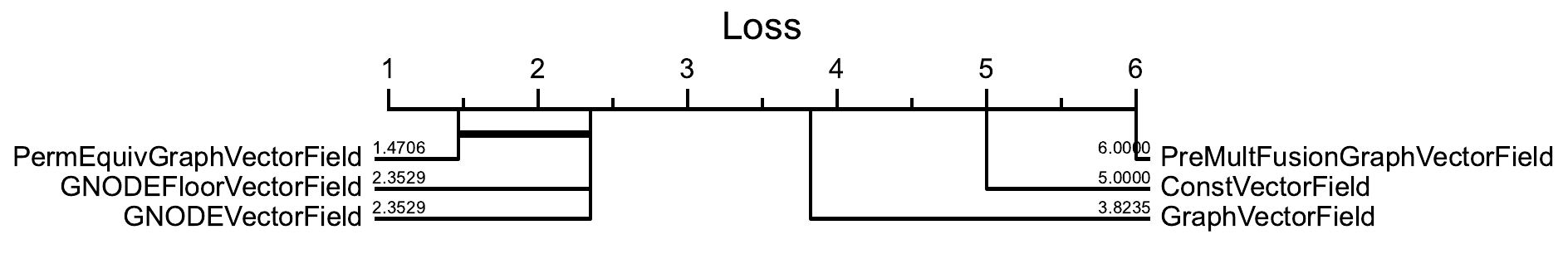}
        \caption{Consensus opinion dynamics.}
    \end{subfigure}

    \begin{subfigure}{\textwidth}
        \includegraphics[width=\linewidth]{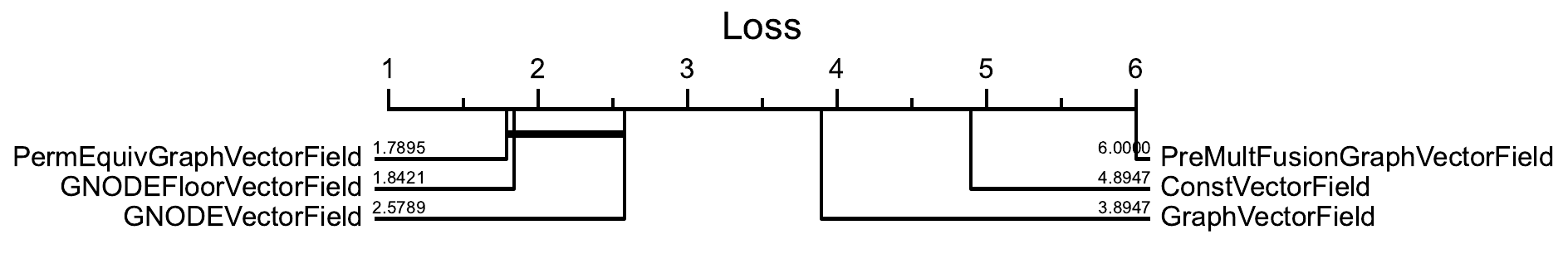}
        \caption{Personal capital dynamics.}
    \end{subfigure}
    \caption{Critical difference diagrams for the synthetic experiments in Section \ref{sec:exp_synth} and Tables~\ref{tab:combined} and \ref{tab:combined_additional}.}
    \label{fig:cd_dyn}
\end{figure}

\begin{figure}[htp]
    \centering
    \begin{subfigure}{\textwidth}
        \includegraphics[width=\linewidth]{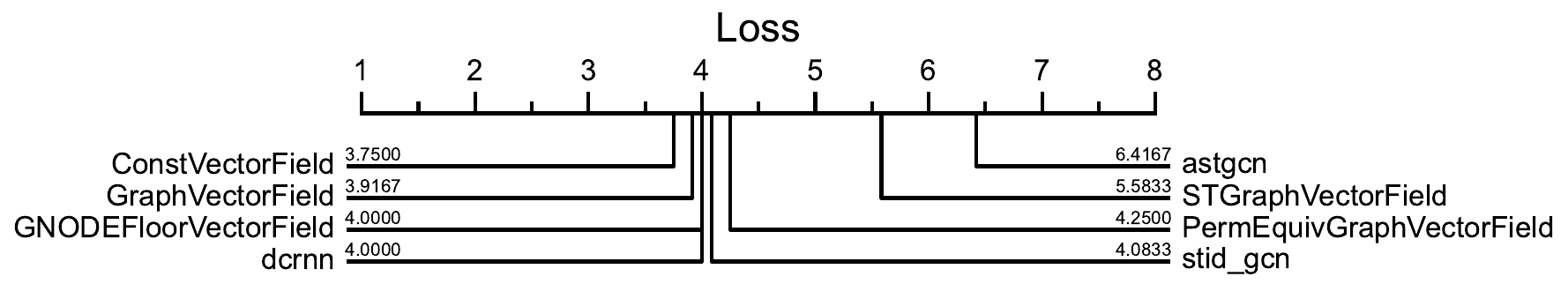}
        \caption{Both PGT datasets.}
    \end{subfigure}
    
    \begin{subfigure}{\textwidth}
        \includegraphics[width=\linewidth]{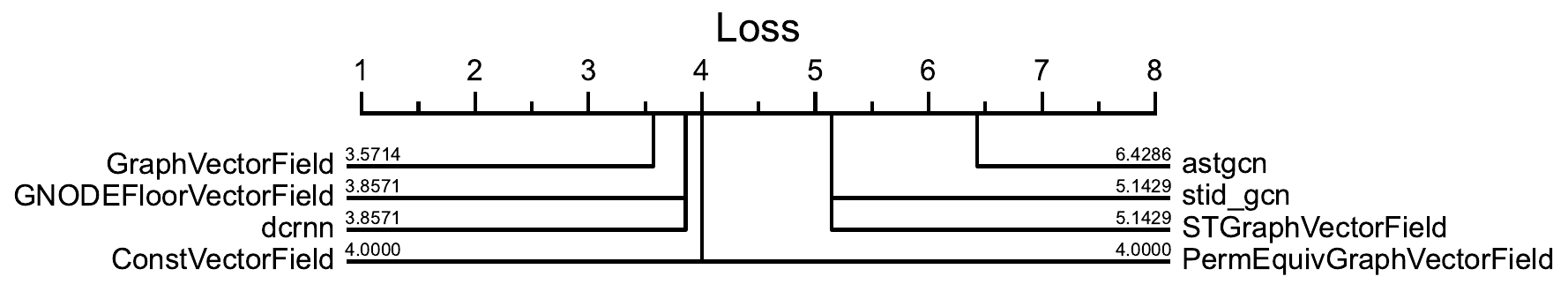}
        \caption{Twitter-tennis dataset.}
    \end{subfigure}

    \begin{subfigure}{\textwidth}
        \includegraphics[width=\linewidth]{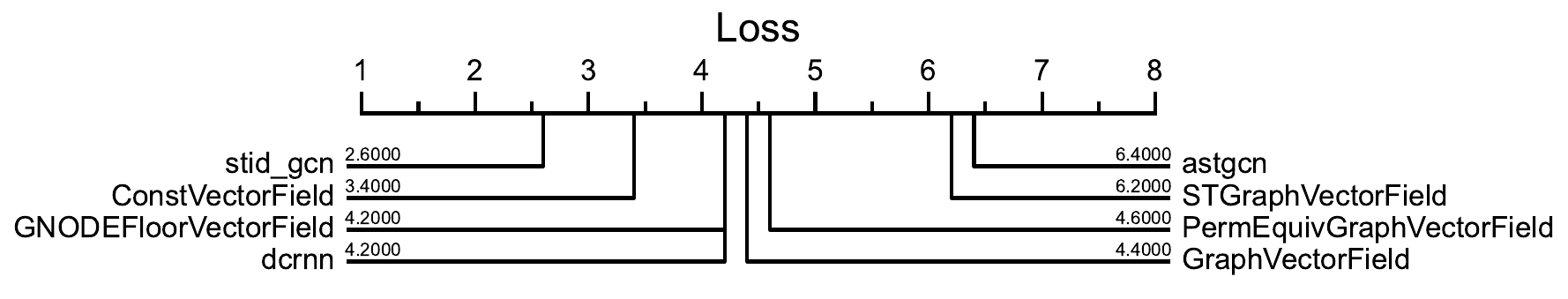}
        \caption{England-covid dataset.}
    \end{subfigure}
    \caption{Critical difference diagrams for the Pytorch Geometric Temporal (PGT) real-world experiments in Section \ref{sec:exp_real_world} and Table~\ref{tab:pgt_erperiments}.}
    \label{fig:cd_pgt}
\end{figure}

\begin{figure}[htp]
    \centering
    \begin{subfigure}{\textwidth}
        \includegraphics[width=\linewidth]{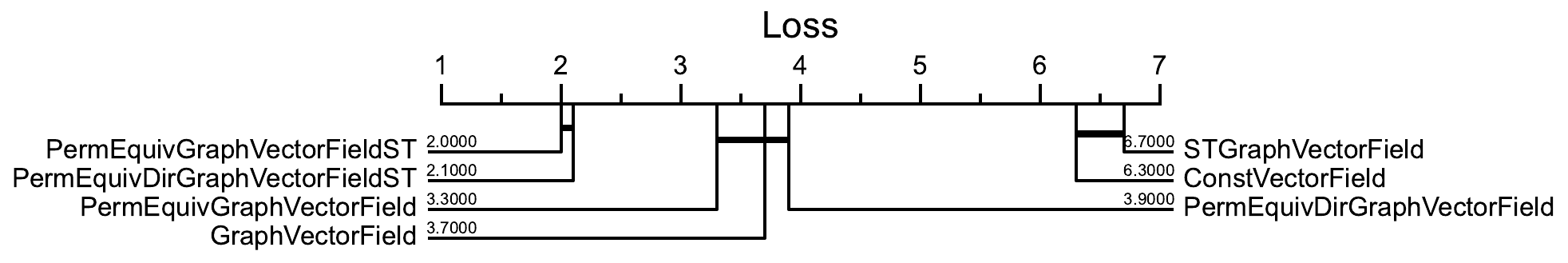}
        \caption{Both TGB datasets.}
    \end{subfigure}
    
    \begin{subfigure}{\textwidth}
        \includegraphics[width=\linewidth]{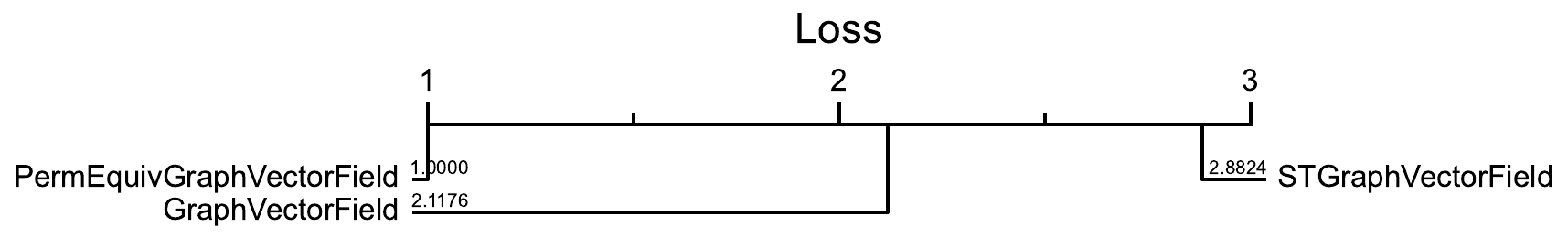}
        \caption{Both TGB dataset (reduced set of models).}
    \end{subfigure}
    
    \begin{subfigure}{\textwidth}
        \includegraphics[width=\linewidth]{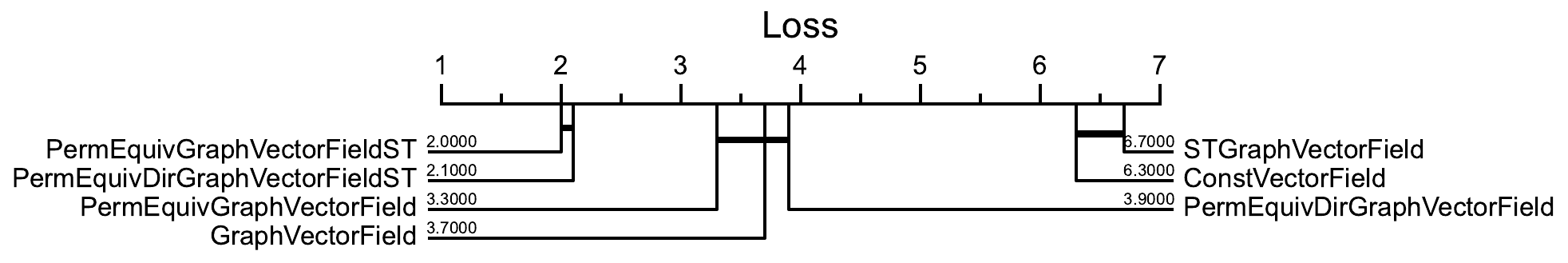}
        \caption{Trade dataset.}
    \end{subfigure}

    \begin{subfigure}{\textwidth}
        \includegraphics[width=\linewidth]{images/cd_diagrams/tgb_trade_performance.pdf}
        \caption{Genre dataset.}
    \end{subfigure}

    \caption{Critical difference diagrams for the Temporal Graph Benchmark (TGB) real-world experiments in Section \ref{sec:exp_real_world} and Table~\ref{tab:tgb}.}
    \label{fig:cd_tgb}
\end{figure}

\section{Runtime analysis}\label{app:runtime}

\textbf{Runtime.} To highlight the computational benefits of our approach, we repeat the experiments from Section~\ref{sec:exp_synth} with varying numbers of nodes and record the time per training epoch in Table~\ref{tab:scaling_results}. Our approach exhibits significantly better runtime performance scaling, as the number of learnable parameters in the fusion matrices $\mathbf{W}_i$ in the Pre Mult GN-CDE scales quadratically with the number of nodes, whereas the parameter count of PENG-CDE is independent of graph size.

\begin{table}
    \vspace{-1em}
    \centering
    \setlength{\tabcolsep}{6pt}
    \renewcommand{\arraystretch}{1.05}
    \caption{Runtime per training epoch ($s$) for GN-CDE and PENG-CDE across different node counts in Section~\ref{sec:exp_synth}.}
    \begin{tabular}{lccccc}
        \toprule
        \textbf{Model} & \textbf{128} & \textbf{256} & \textbf{512} & \textbf{1024} & \textbf{2048} \\
        \midrule
        Pre Mult GN-CDE~\cite{qin2023learning} & 0.45 & 0.56 & 1.18 & 1.62 & 9.15 \\
        PENG-CDE (ours) & 0.42 & 0.49 & 0.57 & 0.90 & 1.50 \\
        \cmidrule(lr){1-6}
        Rel. Improv. (\%) & 7.1 & 14.3 & 107.0 & 80.0 & 510.0 \\
        \bottomrule
    \end{tabular}
    \label{tab:scaling_results}
\end{table}

\section{Miscellaneous}

\subsection{Societal impact}\label{sec:societal_impact}

This work constitutes foundational research and is not tied to any specific application or deployment. As such, it shares the general risks and benefits inherent to novel machine-learning architectures. For example, dynamic graph representation learning models have traditionally been applied to tasks such as traffic forecasting \cite{Choi2022, Choi2023} and molecular modelling \cite{sun2025graphfourierneuralodes}, and we hope that the advances presented here will further drive progress in these and related areas. Overall, we assess the societal impact of this work to be predominantly positive.


\newpage

\end{document}